\newcommand{\rewritesTo}{\rightsquigarrow}
\newtheorem{prop}{Proposition}
\begin{document}
\title{Augmenting Stream Constraint Programming with Eventuality Conditions}

\author{Jasper C.H. Lee\inst{1} \and Jimmy H.M. Lee\inst{2} \and Allen Z. Zhong\inst{2}}
\institute{Department of Computer Science \\Brown University, Providence,
RI 02912,
USA\\{\tt jasperchlee@brown.edu} \and Department of Computer Science
and Engineering \\The Chinese University of Hong Kong, Shatin, N.T., Hong
Kong\\{\tt \{jlee,azhong\}@cuhk.edu.hk}}

\maketitle

\vspace*{-.3cm}
\begin{abstract}

Stream constraint programming is a recent addition to the family of
constraint programming frameworks, where variable domains are sets of
infinite streams over finite alphabets.  Previous works showed promising
results for its applicability to real-world planning and control problems.
In this paper, motivated by the modelling of planning applications, we
improve the expressiveness of the framework by introducing 1) the ``until"
constraint, a new construct that is adapted from Linear Temporal Logic and
2) the \verb~@~ operator on streams, a syntactic sugar for which we provide
a more efficient solving algorithm over simple desugaring.  For both
constructs, we propose corresponding novel solving algorithms and prove
their correctness.  We present competitive experimental results on the
Missionaries and Cannibals logic puzzle and a standard path planning
application on the grid, by comparing with Apt and
Brand's method for verifying eventuality conditions using a CP approach.

\end{abstract}

\section{Introduction}

Stream constraint programming~\cite{CP_on_stream,towards_practical} is a recent addition to the family of constraint programming frameworks.
Instead of reasoning about finite strings~\cite{Golden:2003}, the domain of the constraint variables in a \emph{Stream Constraint Satisfaction Problem} (St-CSP) consists of \emph{infinite streams} over finite alphabets.
A St-CSP solver computes not only one but \emph{all} stream solutions to a given St-CSP, succinctly represented as a deterministic B\"{u}chi automaton.
Because of the infinite stream domains, and the fact we can find all solutions, the framework is particularly suitable for modelling problems involving time series, for example in control and planning, using one variable for each stream as opposed to using one variable per stream per time point in traditional finite domain constraint programming~\cite{Apt:2006}.
Lallouet et al.~\cite{CP_on_stream} first demonstrated such capabilities by implementing the game controller of Digi Invaders\footnote{See \verb~https://www.youtube.com/watch?v=1YafgAcmov4~ for a video of the game as implemented by Casio.}, a popular game on vintage Casio calculator models, using the St-CSP framework.
Lee and Lee~\cite{towards_practical} further applied the framework to synthesise PID controllers for simple robotic systems\footnote{See \verb~http://www.youtube.com/watch?v=dT56qAZt8hI~ and \verb~http://www.youtube.com/watch?v=5GvbG3pN0vY~ for video demonstrations.}.

In addition to using St-CSPs for control, Lee and Lee~also proposed a
framework for modelling planning problems as St-CSPs, adapting that of
Ghallab et al.~\cite{ghallab2016automated} for finite domain constraint
programming.  Even though the St-CSP framework can express the
entirety of what finite domain CSPs could, there are still natural
constraints on plans that we expect to be able to express but are unable
to.  For example, we cannot express the constraint that the generated plan
must \emph{eventually} satisfy a certain condition, without imposing a hard
upper bound on the number of steps before the plan must satisfy the
condition.

This paper focuses on enhancing the expressiveness of the St-CSP framework,
using planning problems as a motivation.  We introduce the ``until"
constraint (Section~\ref{Sect:until}), adapted from Linear Temporal Logic
(LTL)~\cite{Pnueli:1977}, which includes as a special case the
``eventually" constraint.  In addition, in the case where we do wish to
concretely bound the number of steps before a condition is satisfied, we
introduce the \verb~@~ operator (Section~\ref{at_operator}) to simplify the
modelling from the approach of Lee and Lee.  There are two advantages to
using the new operator in constraints: 1) we can better leverage known
structure to accelerate solving, and 2) the notation is significantly less
cumbersome, as measured in the length of the constraint expressions.  
We give experimental evidence (Section~\ref{Sect:Experiments}) of the competitiveness of our new solving algorithms.

\section{Background}
\label{background}
\vspace*{-.1cm}
We review the basics of stream constraint programming.

\vspace*{-.3cm}
\paragraph{\bf Existing Stream Expressions and Constraints.}

A \emph{stream} $a$ over a (finite) alphabet $\Sigma$ is a function $\mathbb{N}_0 \to \Sigma$.
For example, the function $a(n) = n \bmod 2$ is a stream over \emph{any} alphabet containing $\{0,1\}$.
The set of all streams with alphabet $\Sigma$ is denoted by $\Sigma^\omega$.
The notation $a(i, \infty)$ is used for the stream suffix $a'$ where $a'(j) = a(j+i)$.
For a language $L$, we similarly define $L(i, \infty) = \{ a(i, \infty) \, | \, a \in L \}$.
In this paper, we are only concerned with St-CSPs whose variables take alphabets that are integer intervals, i.e. $[m..n]^\omega$ for some $m \le n \in \mathbb{Z}$.
However, the framework generalises naturally to any other finite alphabets.

To specify expressions, there are primitives such as variable streams, which are the variables in the St-CSP, and constant streams.
For example, the stream $2$ denotes the stream $s$ where $s(i) = 2$ for all $i \ge 0$.

\emph{Pointwise} operators, such as integer arithmetic operators \{\verb-+-, \verb+-+, \verb+*+, \verb+/+, \verb+%+\}, combine two streams at each index using the corresponding operator.
Integer arithmetic relational operators are \{\verb+lt+, \verb+le+, \verb+eq+, \verb+ge+, \verb+gt+, \verb+ne+\}.
They compare two streams pointwisely and return a \emph{pseudo-Boolean stream}, that is a stream in $[0..1]^\omega$.
Pointwise Boolean operators $\{ \verb+and+, \, \verb+or+ \}$ act on any two pseudo-Boolean streams $a$ and $b$.
The final pointwise operator supported is \verb+if+-\verb+then+-\verb+else+.
Suppose $c$ is pseudo-Boolean, and $a, b$ are streams in general, then $(\verb+if+ \; c \; \verb+then+ \; a \; \verb+else+ \; b)(i)$ is $a(i)$ if $c(i) = 1$ and $b(i)$ otherwise.
There are also three \emph{temporal} operators, in the style of the Lucid programming language~\cite{Wadge:1985}: \verb+first+, \verb+next+ and \verb+fby+.
Suppose $a$ and $b$ are streams.
We have \verb+first+ $a$ being the constant stream of $a(0)$, and \verb+next+ $a$ being the ``tail" of $a$, that is \verb+next+ $a$ = $a(1, \infty)$.
In addition, $a$ \verb+fby+ $b = c$ is the concatenation of the head of $a$ with $b$ ($a$ \emph{followed by} $b$), that is $c(0) = a(0)$ and $c(i) = b(i-1)$ for $i \ge 1$.
Note that stream expressions can involve stream variables.
For example, (\verb~first~ $y$) \verb~+~ (\verb~next~ $x$) is an expression.

Given stream expressions, we can now use the following relations to express stream constraints.
For integer arithmetic comparisons $R$ $\in$ $\{$\verb+<+, \verb+<=+, \verb+==+, \verb+>=+, \verb+>+, \verb+!=+$\}$, the constraint $a \, R \, b$ is \emph{satisfied} if and only if the arithmetic comparison $R$ is true at every point in the streams.
Therefore, a constraint is \emph{violated} if and only if there exists a time point at which the arithmetic comparison is false.
For example, \verb~next~ $x$ \verb~!=~ $y$ \verb~+~ $1$ is a constraint enforcing that the stream expression $y$ \verb~+~ $1$ is not equal to the stream \verb~next~ $x$ at all time points.
Similarly, we define the constraint $a$ \verb~->~ $b$ to hold if and only if for all $i \ge 0$, $a(i) \neq 0$ implies $b(i) \neq 0$.
Here we use the C language convention for interpreting integers as Booleans.

Care should be taken to distinguish between constraints and relational expressions.
Relational operators take two streams and output a pseudo-Boolean stream.
Constraints, however, are relations on streams.
Two simple examples illustrate the difference: $x$ \verb~le~ $4$ is a pseudo-Boolean stream, whereas $x$ \verb~<=~ $4$ is a constraint that enforces $x$ to be less than or equal to $4$ at every time point.

\paragraph{\bf Stream Constraint Satisfaction Problems.}

\begin{definition}{\cite{CP_on_stream,towards_practical}}
A \emph{stream constraint satisfaction problem} (St-CSP) is a triple $P = (X, D, C)$, where $X$ is the set of variables and $D(x) = (\Sigma(x))^\omega$ is the \emph{domain} of $x \in X$, the set of all streams with alphabet $\Sigma(x)$.
A constraint $c \in C$ is defined on an ordered subset $Scope(c)$ of variables, and every constraint must be formed as specified previously (though it is the aim of this paper to extend the class of specifiable constraints).
\end{definition}

Fig.~\ref{stcsp for planning} gives an example St-CSP.
An \emph{assignment} $A : X \to \bigcup_{x \in X} D(x)$ is a function mapping a variable $x_i \in X$ to an element in its domain $D(x_i)$.
A constraint $c$ is satisfied by an assignment $A$ if and only if it is satisfied by the streams $\{A(x)\}_{x \in Scope(c)}$, and a St-CSP $P$ is \emph{satisfied} by $A$ if and only if all constraints $c \in C$ are satisfied by $A$.
We call the assignment $A$ a \emph{solution} of the St-CSP $P$.
We denote the \emph{solution set} of $P$, namely the set of all solutions $A$ to $P$, by $sol(P)$.
The St-CSP $P$ is \emph{satisfiable} if $sol(P)$ is non-empty, and \emph{unsatisfiable} otherwise.
We also say that two St-CSPs $P$ and $P'$ are \emph{equivalent} (denoted $P \equiv P'$) when $sol(P) = sol(P')$.

Given a set of constraints $C$ and an integer $i$, the \emph{shifted view} of $C$ is defined as $C(i, \infty) = \{ c_k(i, \infty) \, | \, c_k \in C \}$ by interpreting constraints as languages.
Similarly, given an St-CSP $P = (X, D, C)$ and a point $i$, the \emph{shifted view} of $P$ is defined as $\hat{P}(i) = (X, D, C(i, \infty))$.

\vspace*{-.3cm}
\paragraph{\bf Solving St-CSPs.}
\label{Sect:BackgroundSearch}

Lallouet et al.~\cite{CP_on_stream} showed that the solution set $sol(P)$
of a St-CSP $P$ is a \emph{deterministic $\omega$-regular language},
accepted by some \emph{deterministic B\"{u}chi automaton}
$\mathcal{A}$, which is a deterministic finite automaton for languages of streams~\cite{Buechi:1990}.
A stream $s$ is accepted by $\mathcal{A}$ if the execution of $\mathcal{A}$ on input $s$ visits accepting states of $\mathcal{A}$ infinitely many times.
When given a St-CSP $P$, the goal of a St-CSP solver, then, is to produce a \emph{deterministic B\"{u}chi automaton} $\mathcal{A}$, called a \emph{solution automaton} of $P$, that accepts the language $sol(P)$.
We note that the work of Golden and Pang~\cite{Golden:2003} for finite string constraint reasoning also finds all solutions as a single regular expression.

A St-CSP can be solved by a two-step
approach~\cite{CP_on_stream,towards_practical}.
First, a given St-CSP $P$ is
\emph{normalised} into some normal form $P'$ 
where auxiliary variables may
be introduced, but $P'$ is equivalent to $P$ {\em modulo\/} the auxiliary
variables.  Afterwards, the
\emph{search tree} (as defined below) is explored and ``morphed" into a
deterministic B\"{u}chi automaton via a \emph{dominance detection
procedure}, which is then output as the solution automaton.
In the rest of the paper, when we augment the language for specifying
stream expressions and constraints, we shall also follow the above two-step
approach to solve these new classes of St-CSPs.  As such, we only have to
(a) specify our new normal forms, (b) give a corresponding normalisation
procedure, and (c) detail the new dominance detection procedures.

We now define the notion of search trees for St-CSPs, adapted from that for
traditional finite-domain CSPs~\cite{Dechter:2003}.  We also describe how
they are explored and how dominance detection allows us to compute solution
automata from search trees.  A \emph{search tree} for a St-CSP $P$ is a
tree with potentially infinite height.  Its nodes are St-CSPs with the root
node being $P$ itself.  The \emph{level} of a node $N$ is defined as $0$
for the root node and recursively for descendants.  A child node $Q' = (X,
D, C \cup \{c'\})$ at level $k+1$ is constructed from a parent node $P' =
(X, D, C)$ at level $k$ and an \emph{instantaneous assignment} $\tau(x) \in
\Sigma(x)$, where $\tau$ takes a stream variable $x$ and returns a value in
$\Sigma(x)$.  In other words, $\tau$ assigns a value to each variable at
time point $k$.  The constraint $c'$ specifies that for all $x \in X$,
$x(k) = \tau(x)$ and for all $i \neq k$, $x(i)$ is unconstrained.  We write
$P' \overset{\tau}{\to} Q'$ for such a parent to child construction, and
label the edge on the tree between the two nodes with $\tau$.
During search in practice, we shall \emph{not} consider every possible
instantaneous assignment, but instead consider only the ones remaining
after applying \emph{prefix-$k$ consistency}~\cite{CP_on_stream}.

We can identify a search node $Q$ at level $k$ with the shifted view $\hat{Q}(k)$.
Taking this view, if $\hat{P}(k) = (X, D, C)$ is the parent node of $\hat{Q}(k+1)$, then $\hat{Q}(k+1) = (X, D, C \cup \{c'\})(1) = (X, D, (C \cup \{c'\})(1, \infty))$ where $c'$ is the same constraint as defined above.

Recall that a constraint violation requires only a single time point at which the constraint is false.
Therefore, we can generalise the definition of constraint violation such that a finite prefix of an assignment can violate a constraint.
A sequence of instantaneous assignments from the root to a node is isomorphic to a finite prefix of an assignment, and so the definition again generalises.
Suppose $F = (X, D, C)$ is a node at level $k$ such that $\{\tau_i\}_{i \in [1..k]}$ is the sequence of instantaneous assignments that constructs $F$ from the root node, i.e. $P \overset{\tau_1}{\to} \ldots \overset{\tau_k}{\to} F$.
We say node $F$ is a \emph{failure} if and only if $\{\tau_i\}_{i \in [1..k]}$ violates a constraint $c \in C$.

Given a normalised St-CSP $P$, its search tree is then explored using depth first search.
Backtracking happens when the current search node is a failure.
A search node $M$ at level $k$ is said to \emph{dominate} another search node $N$ at level $k'$, written $N \prec M$, if and only if their shifted views are equivalent ($\hat{M}(k) \equiv \hat{N}(k')$) and $M$ is visited before $N$ during the search~\cite{CP_on_stream,towards_practical}.
When the algorithm visits a search node $N$ that is dominated by a previously visited node $M$, the edge pointing to $N$ is redirected to $M$ instead.
If the algorithm terminates, then the resulting (finite) structure is a deterministic B\"{u}chi automaton (subject to accepting states being specified).
If dominance detection were perfect, then the search algorithm terminates, because every branch either ends in a failure or contains two nodes with the same shifted views~\cite{CP_on_stream}.
The crucial missing detail from this high-level algorithm, then, is \emph{exactly} how dominance is \emph{detected} in practice.
Search node dominance is an inherently semantic notion, implying that it is often inefficient to detect precisely.
Thus, previous works identify efficient \emph{syntactic approximations} to detecting dominance such that the overall search algorithm terminates~\cite{CP_on_stream,towards_practical}.
We shall also give a new dominance detection procedure in light of the new ways of forming stream expressions and constraints.
As for specifying the set of accepting states, previous work take \emph{all} states as accepting states, whereas we shall give a more nuanced criterion.

\section{The ``Until" Constraint}
\vspace*{-.1cm}
\label{Sect:until}
In this section, we introduce the ``until" constraint to the St-CSP framework.
Recall that all the stream constraints introduced in Section~\ref{background} are pointwise predicates.
That is, the constraint is satisfied if its corresponding predicate holds for every single time point of its input streams.
The ``until" constraint, as we shall later see, is \emph{not} a pointwise constraint.

Let us consider the following path planning problem on the standard $n \times n$ grid world domain~\cite{Sturtevant:2012,Harabor:2011}.
Between any two neighbouring vertices on the grid, there could be 0, 1 or 2 \emph{directed} edges.
We ask for all paths on the directed graph from a given start point that eventually visit a given end point.

Our method finds more than a shortest path.
Modelling this problem as a St-CSP allows us to find a succinct description of \emph{all} the paths, and moreover allows for additional side constraints.
Well-studied side constraints in the literature include precedence constraints~\cite{Kilby:2000} and time window constraints~\cite{Pesant:1998}.

We can formulate as a St-CSP the condition that the path starts at $(i_s,j_s)$, has to respect the graph, and furthermore in the St-CSP model check whether the goal of visiting the end point $(i_g,j_g)$ is attained.
This St-CSP is shown in Fig.~\ref{stcsp for planning}.
We use variables $x, y$ to represent the $x$ and $y$ coordinates of the current position.
In addition, a variable $goal$ denotes if we have visited the end point.
The second to last constraint is such that if $goal$ is true in one time point, it stays true in the next one as well.
The last constraint says that if the path has reached the end point, then it stays there indefinitely.

\begin{figure}[t]
\small
\vspace*{-.3cm}
\begin{tabular}{l}
var $x,y$ with alphabet $[1..n]$\\
var $goal$ with alphabet $[0..1]$\\
\\
\verb~first~ $x$ \verb~==~ $i_s$\\
\verb~first~ $y$ \verb~==~ $j_s$\\
\\
For each vertex $(i, j)$,\\
((\verb~next~ $x$ \verb~eq~ $i$) \verb~and~ (\verb~next~ $y$ \verb~eq~ $j$)) \verb~->~ (($x$ \verb~eq~ $i$ \verb~and~ $y$ \verb~eq~ $j$) \verb~or~ \\
\hspace*{.2cm} ($x$ \verb~eq~ $i_1$ \verb~and~ $y$ \verb~eq~ $j_1$) \verb~or~ $\ldots$ \verb~or~ ($x$ \verb~eq~ $i_{d_{(i,j)}}$ \verb~and~ $y$ \verb~eq~ $j_{d_{(i,j)}}$) \\
where $(i_1, j_1), \ldots, (i_{d_{(i,j)}}, j_{d_{(i,j)}})$ have edges into $(i, j)$\\
and $d_{(i,j)}$ is the in-degree of $(i,j)$\\
\\
$goal$ \verb~==~ ($x$ \verb~eq~ $i_g$ \verb~and~ $y$ \verb~eq~ $j_g$) \verb~or~ (0 \verb~fby~ $goal$)\\
$goal$ \verb~eq~ 1 \verb~->~ (($x$ \verb~eq~ \verb~next~ $x$) \verb~and~ ($y$ \verb~eq~ \verb~next~ $y$))
\end{tabular}
\caption{St-CSP Model for the Path Planning Problem}
\label{stcsp for planning}
\vspace*{-.6cm}
\end{figure}

In this current model, we have not enforced that the goal is indeed \emph{eventually} attained at some point.
An undesirable solution to the St-CSP would be, for example, to stay in one location forever.
However, variants of the ``eventually" constraint is not expressible in the St-CSP framework prior to this work, since all constraints are inherently \emph{pointwise}.
Temporal operators are not expressive enough for our purpose, since these
operators shift streams by a constant number of time points only.
The ``eventually" constraint, on the other hand, can be satisfied at an unbounded number of time points away into the future.

We thus introduce the ``until" constraint, adapted from Linear Temporal Logic (LTL)~\cite{Pnueli:1977} and essentially equivalent to ``eventually"~\cite{Emerson:1990}.
\begin{definition}[The ``Until" Constraint]
Given two streams $a, b$, the constraint $a$ \verb~until~ $b$ is satisfied if and only if there exists a time point $i \geq 0$ such that
1) for all $j < i$, $a(j) \neq 0$ and
2) $b(i) \neq 0$.
We say that the constraint is \emph{finally satisfied} at time point $i$ if $b(i) \neq 0$.
Note that we are again adapting the C language convention for interpreting integers as Booleans.
\end{definition}

The ``eventually" constraint is expressible in terms of the ``until" constraint.
Suppose we want to express the constraint that a predicate $G$ on stream elements eventually holds, for example if $G$ is ``$goal$ \verb~eq~ 1".
Then, we can express the constraint as ``$1$ \verb~until~ $G$", or in our particular example, ``$1$ \verb~until~ ($goal$ \verb~eq~ 1)".
Conversely, ``$a$ \verb~until~ $b$" is equivalent to ``$c$ \verb+==+ $b$
\verb+fby+ (\verb+next+ $b$ \verb+or+ $c$); (\verb+not+ $c$) \verb+->+ ($a$
\verb+ne+ $0$); \verb+eventually+ $b$;".

\subsection{Normalising ``Until" Constraints}

In light of the ``until" constraint, we
give a new constraint normal form.
A St-CSP is in \emph{normal form} if it contains only constraints of the following forms:
\begin{itemize}
\item Primitive next constraints: $x_i$ \verb~==~ \verb~next~ $x_j$
\item Primitive until constraints: $x_i$ \verb~until~ $x_j$
\item Primitive pointwise constraints with no \verb~next~, \verb~fby~ or \verb~until~ (but can contain \verb~first~ operators).
\end{itemize}

Any St-CSP can be transformed into this normal form by applying the rewriting system below.
We adopt notations from programming language semantics theory
\cite{Winskel:1993}, writing $c$\,[\_] for \emph{constraint contexts}, i.e. constraints with placeholders for syntactic substitution.
For example, if $c$\,[\_] $=$ [\_ \verb~+~ $ 3$ \verb+>=+ 4], then $c$\,[\verb+first+ $\alpha$] $=$ [(\verb~first~ $\alpha$) \verb-+- 3 \verb+>=+ 4].
We also write a constraint rewriting transition as ($C_0$, $C_1$) $\rightsquigarrow$ ($C'_0$, $C'_1$), where $C_0$, $C_1$, $C'_0$ and $C'_1$ are sets of constraints.
$C_0$ is the set of constraints that potentially could be further
normalised, and $C_1$ is the set that is already in normal form.
Hence, the initial constraint pair for the St-CSP $($$X$, $D$, $C$$)$ is $(C, \{\})$.
Rules are applied \emph{in arbitrary order} until none are applicable.

\begin{itemize}
   \item $($$C_0$ $\cup$ \{$c$\,[\verb+next+ $expr$]\}, $C_1$$)$ $\rewritesTo$ $($$C_0$ $\cup$ \{$c$\,[$x_1$], $x_2$ \verb+==+ $expr$\}, $C_1$ $\cup$ \{$x_1$ \verb+==+ \verb+next+ $x_2$\}$)$, where $x_1$ and $x_2$ are fresh auxiliary stream variables.

   \item $($$C_0$ $\cup$ \{$c$\,[$expr_1$ \verb+fby+ $expr_2$]\}, $C_1$$)$ $\rewritesTo$ $($$C_0$ $\cup$ \{$c$\,[$x_1$], $x_2$ \verb+==+ $expr_1$, $x_3$ \verb+==+ $expr_2$\}, $C_1$ $\cup$ \{\verb+first+ $x_1$ \verb+==+ \verb+first+ $x_2$, $x_3$ \verb+==+ \verb+next+ $x_1$\}$)$, where $x_1$, $x_2$ and $x_3$ are fresh auxiliary stream variables.

\item $($$C_0$ $\cup$ \{$expr_1$ \verb~until~ $expr_2$\}, $C_1$$)$ $\rewritesTo$ $($$C_0$ $\cup$ \{$x_1$ \verb~==~ $expr_1$, $x_2$ \verb+==+ $expr_2$\}, $C_1$ $\cup$ \{$x_1$ \verb+until+ $x_2$\}$)$, where $x_1$ and $x_2$ are fresh auxiliary stream variables.
\end{itemize}
We can check easily the following properties of the new rewriting system.
 
 \begin{prop}
 	The new rewriting system always terminates, regardless of the order in which the rules are applied.
 \end{prop}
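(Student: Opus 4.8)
The plan is to exhibit a well-founded termination measure that strictly decreases with every rewrite step, independent of which rule fires and on which occurrence it fires. For a set of constraints $C_0$, I would define $\mu(C_0)$ to be the total number of syntactic occurrences of the temporal operators \texttt{next}, \texttt{fby}, and \texttt{until} across all constraints in $C_0$; crucially, \texttt{first} is \emph{not} counted. Since each constraint is a finite syntactic object and $C_0$ stays finite at every stage (we begin with the finite set $C$, and each rule adds only finitely many constraints), $\mu(C_0)$ is always a well-defined non-negative integer. The right component $C_1$ already lies in normal form and never triggers further rewriting, so it suffices to track $\mu(C_0)$ alone.

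The heart of the argument is to verify that each of the three rules decreases $\mu(C_0)$ by exactly $1$. For the \texttt{next} rule applied to $c[\texttt{next}\ expr]$, filling the hole with the fresh variable $x_1$ deletes the matched top-level \texttt{next} together with all operators inside $expr$ from that constraint, but the new constraint $x_2$~\texttt{==}~$expr$ returned to $C_0$ re-introduces exactly the operators of $expr$; the net effect on $C_0$ is the loss of the single peeled \texttt{next}, while the generated primitive $x_1$~\texttt{==}~\texttt{next}~$x_2$ is deposited into $C_1$, which $\mu$ ignores. The \texttt{fby} rule is analogous: the top-level \texttt{fby} is removed while $expr_1$ and $expr_2$ are relocated without change of their operator counts into $x_2$~\texttt{==}~$expr_1$ and $x_3$~\texttt{==}~$expr_2$, and the auxiliary constraints added to $C_1$ (one \texttt{next}, and one pair of uncounted \texttt{first}s) do not affect $\mu(C_0)$. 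Likewise the \texttt{until} rule strips one top-level \texttt{until} from $C_0$, relocates $expr_1, expr_2$ into equality constraints preserving their internal counts, and pushes the primitive $x_1$~\texttt{until}~$x_2$ into $C_1$. In every case $\mu(C_0)$ drops by precisely one.

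Termination then follows immediately: a strictly decreasing sequence of non-negative integers must be finite, so no infinite rewrite sequence exists. Independence from the order of rule application is in fact stronger than required. A rule is applicable precisely when $\mu(C_0) > 0$ (any remaining \texttt{next} or \texttt{fby} appears as a subexpression of some constraint and matches the corresponding rule, and any remaining \texttt{until}, being a top-level constraint, matches the \texttt{until} rule), and since every applicable rule lowers the measure by exactly one, every terminating sequence has the same length, namely the initial value $\mu(C)$.

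The one point demanding care — and essentially the only place the argument could break — is the bookkeeping showing that the operators residing inside the matched subexpressions are merely \emph{relocated} into the fresh $C_0$ constraints rather than duplicated, and that none of the auxiliary constraints sent to $C_1$ ever contribute a counted operator back to $C_0$. Once one confirms that the rules' right-hand sides place all newly created temporal operators into $C_1$, and recalls that \texttt{first} is deliberately excluded so that the two \texttt{first} operators produced by the \texttt{fby} rule are harmless, the ``exactly one'' decrease is routine, and the proposition follows.
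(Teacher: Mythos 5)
Your proposal is correct and takes essentially the same approach as the paper: the paper's one-line proof observes that the number of \verb~next~, \verb~fby~ and \verb~until~ keywords in $C_0$ is decreasing, which is exactly your measure $\mu$. Your write-up merely makes the bookkeeping explicit (strict decrease by one per rule application, with the relocated operators and the \verb~first~-exclusion accounted for), which is a more careful rendering of the same argument rather than a different one.
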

 
 \begin{proof}
 The number of \verb~next~, \verb~fby~ and \verb~until~ keywords in the set $C_0$ is monotonically decreasing.
 \end{proof}

 \begin{prop}
 	The rewriting system has the Church-Rosser property (up to auxiliary variable renaming).
 \end{prop}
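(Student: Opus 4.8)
The plan is to reduce the Church--Rosser property to \emph{local confluence} and then verify the latter by a short critical-pair analysis. Since Proposition~1 already establishes that the rewriting system is terminating (strongly normalising), Newman's Lemma applies: for a terminating abstract rewriting system, global confluence --- equivalently, the Church--Rosser property --- follows from local confluence. Hence it suffices to show that whenever a constraint pair $(C_0, C_1)$ admits two single-step rewrites $(C_0, C_1) \rewritesTo (C_0^a, C_1^a)$ and $(C_0, C_1) \rewritesTo (C_0^b, C_1^b)$, the two results can be rejoined: there is a common $(C_0^\ast, C_1^\ast)$ with $(C_0^a, C_1^a) \rewritesTo^{*} (C_0^\ast, C_1^\ast)$ and $(C_0^b, C_1^b) \rewritesTo^{*} (C_0^\ast, C_1^\ast)$, all up to renaming of the freshly introduced auxiliary variables.

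First I would observe that each of the three rules rewrites a \emph{redex} pinpointed by a single distinguished keyword occurrence (\verb~next~, \verb~fby~, or \verb~until~) at a specific position inside a constraint, and that the three left-hand patterns share no common top-level function symbol. Consequently two redexes selected by two applicable rule instances can relate in only three ways, giving the following cases. \textbf{(a)} They lie in two \emph{different} constraints of $C_0$: the rules then act on disjoint parts of the (unordered) constraint set and plainly commute, since applying one rule leaves the other constraint --- and hence the other redex --- untouched. \textbf{(b)} They lie in the \emph{same} constraint but at disjoint positions (neither redex contains the other): each rule merely replaces its own subterm by a fresh variable and appends auxiliary constraints, so the other redex survives verbatim in the context, and applying the complementary rule in each branch closes the diagram. \textbf{(c)} They are \emph{nested} within the same constraint, one redex strictly inside the other, for instance the \verb~next~ and \verb~fby~ redexes in \verb~next~($a$ \verb~fby~ $b$): the outer rule relocates the inner redex into a newly created constraint but does not destroy it, and symmetrically the inner rule leaves the outer keyword in place, so in each branch one further application of the complementary rule reaches the same fully decomposed form.

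The only genuine subtlety, and the step I expect to require the most care, is the bookkeeping of the fresh auxiliary variables: because every rule introduces new variables, the two branches never produce \emph{syntactically identical} constraint sets, only sets that agree after a bijective renaming of these fresh variables (and after reordering constraints, which is harmless as $C_0$ and $C_1$ are sets). In each of cases (a)--(c) the fresh variables introduced along the two branches name exactly the same subexpressions of the original constraint, so such a renaming always exists; checking this explicitly in the nested case~(c) is where one must be attentive, but no real obstruction arises because the patterns never overlap critically at their roots. Assembling the three cases establishes local confluence, and Newman's Lemma then yields the Church--Rosser property up to auxiliary variable renaming.
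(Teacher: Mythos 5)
Your proposal is correct, but it is framed differently from the paper's argument. The paper's entire proof is the assertion that the rules are \emph{pairwise commutative}: any two single-step rewrites from the same constraint pair can be closed by one further application of the complementary rule on each side (up to renaming of the fresh variables), i.e.\ a diamond property, which yields the Church--Rosser property directly with no appeal to termination. You instead route the argument through Newman's Lemma, importing termination from the preceding proposition and reducing the goal to local confluence, which you then verify by the case analysis (a)--(c). Notice, though, that your own case analysis actually establishes the stronger property: in each of your three cases the divergence is closed by exactly \emph{one} application of the complementary rule in each branch (there are no critical overlaps, since the \texttt{until} rule acts on a whole constraint while the \texttt{next} and \texttt{fby} redexes it may contain survive intact inside the freshly created constraints, and vice versa). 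So the Newman scaffolding and the dependence on termination are dispensable --- what you proved is precisely the pairwise commutation the paper asserts. What your route buys is generality and safety: local confluence plus Newman's Lemma is the standard hammer and would still work if a future rule broke one-step commutation; what the paper's route buys is self-containedness (no reliance on Proposition~1) and a stronger conclusion (strong confluence). A genuine plus of your write-up is that you make explicit the bookkeeping of fresh auxiliary variables --- that the two branches agree only modulo a bijective renaming that identifies variables naming the same subexpressions --- which the paper leaves entirely implicit.
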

 
 \begin{proof}
 	The rules are pairwise commutative.
 \end{proof}

 \begin{prop}
 	The rewriting system is sound, in the sense that it preserves the projection of the solution set of the resulting St-CSP into the original variables.
 \end{prop}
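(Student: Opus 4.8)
The plan is to reduce the statement to a single-step claim and then verify that claim rule by rule. Because the system terminates (Proposition~1), every normalisation is a finite sequence of rewrite steps, so it suffices to show that one step $(C_0, C_1) \rewritesTo (C_0', C_1')$ preserves the solution set projected onto the variables present before the step; the full result then follows by induction on the number of steps, projecting successively down to the original variable set $X$. Writing $P$ for the St-CSP whose constraints are $C_0 \cup C_1$ over a variable set $Y$, and $P'$ for the St-CSP whose constraints are $C_0' \cup C_1'$ over $Y \cup A$ with $A$ the freshly introduced auxiliary variables, the single-step goal is to prove $\pi_Y(sol(P')) = sol(P)$, where $\pi_Y$ denotes restriction of an assignment to $Y$.

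The main tool is a substitution (referential transparency) lemma: since every pointwise and temporal operator denotes a function of its argument streams, and the satisfaction of every constraint---including \verb+until+---depends only on the streams denoted by its arguments, whenever an assignment satisfies an equality $x$ \verb+==+ $expr$ we may replace $expr$ by $x$ (and conversely) inside any constraint without changing its truth value. I would first establish this compositionality, which lets me freely swap a named auxiliary variable for the subexpression it captures.

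With the lemma in hand I would argue both inclusions by cases on the rule applied. For the \verb+next+ and \verb+until+ rules the auxiliary variables are fixed by explicit equalities ($x_2$ \verb+==+ $expr$ and $x_1$ \verb+==+ \verb+next+ $x_2$; resp.\ $x_1$ \verb+==+ $expr_1$ and $x_2$ \verb+==+ $expr_2$), so the backward inclusion follows by using these equalities and the substitution lemma to recover the original constraint, while the forward inclusion follows by extending a given solution $A$ to $A'$ in the unique way that makes the new equalities hold and checking the added normal-form constraints by construction. The \verb+fby+ rule is the delicate case, because the auxiliary variable $x_1$ standing for $expr_1$ \verb+fby+ $expr_2$ is only defined \emph{implicitly}, through the pair \verb+first+ $x_1$ \verb+==+ \verb+first+ $x_2$ and $x_3$ \verb+==+ \verb+next+ $x_1$. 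Here I would verify that these two constraints together determine $x_1$ on every time point with neither under- nor over-constraining: the first forces $x_1(0) = x_2(0) = expr_1(0)$, and the second forces $x_1(i) = x_3(i-1) = expr_2(i-1)$ for all $i \ge 1$, which is exactly the defining behaviour of $expr_1$ \verb+fby+ $expr_2$. Once this determinacy is shown, the same forward/backward substitution argument closes the case.

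The step I expect to be the main obstacle is precisely this \verb+fby+ determinacy: confirming that the split encoding (one constraint for the head, one for the tail via \verb+next+) pins down $x_1$ uniquely and consistently, and that the freshness of $x_1, x_2, x_3$ guarantees the forward extension is always available without clashing with existing constraints. The \verb+next+ and \verb+until+ cases, and the inductive lifting from one step to the whole sequence, are then routine.
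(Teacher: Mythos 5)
Your proposal is correct and follows essentially the same route as the paper, whose entire proof is the one-line observation that the claim follows ``by induction on the number of rule applications, since the statement holds every time a rule is applied.'' Your write-up simply fills in the per-rule verification that the paper leaves implicit---the substitution lemma, the forward/backward inclusions, and the \verb+fby+ determinacy analysis (correctly showing that \verb+first+ $x_1$ \verb+==+ \verb+first+ $x_2$ and $x_3$ \verb+==+ \verb+next+ $x_1$ pin down $x_1$ exactly as $expr_1$ \verb+fby+ $expr_2$)---and these details all check out.
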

 
 \begin{proof}
 	By induction on the number of rule applications, since the statement holds every time a rule is applied.
 \end{proof}

\vspace*{-.2cm}
\subsection{Search Algorithm and Dominance Detection}
\vspace*{-.1cm}

In the following, we assume that all given St-CSPs are in normal form.

Recalling the high-level solving algorithm in Section~\ref{background}, we give in this section a concrete instantiation of the syntactic dominance detection procedure.
Our syntactic procedure should possess two key properties.
First, the procedure should be \emph{sound}:\ if two search nodes are claimed to have equivalent shifted views by the procedure, then they do indeed have equivalent shifted views.
Second, the approximation should be sufficiently close to the semantic notion, such that the overall search algorithm terminates and produces a \emph{finite} structure.
Otherwise, in the extreme scenario where the dominance detection procedure never reports any dominance, the search algorithm will simply search the entire (usually infinite) search tree, resulting in non-termination.

Our dominance detection procedure, as with previous works~\cite{CP_on_stream,towards_practical}, involves keeping
track of a \emph{syntactic} representation of the shifted view of each
search node, and detects dominance by checking \emph{syntactic equivalence}
between the two representations.  Hereafter, we refer to search nodes and
their syntactic representations interchangeably for narratory simplicity.
Each search node, then, is represented by two components: 1) a set $C$ of
St-CSP constraints and 2) a table $h$, called
\emph{historic values}, storing for each variable $x_j$ in a primitive next
constraint ``$x_i$ \verb~==~ \verb~next~ $x_j$" the value assigned to $x_i$
\emph{at the previous time point}.  The historic values are used to enforce
primitive next constraints.  If a value $v$ is assigned to $x_i$ at the
previous time point, then \verb~first~ $x_j$ \verb~==~ $v$ holds in the
shifted view of the current search node.  We thus store $v$ in the table
entry for $x_j$.

Algorithm~\ref{node_construct_alg_until} gives pseudocode for two functions, \textsc{Construct} and \textsc{AreEqual}, both adapted from the algorithm of Lee and Lee~\cite{towards_practical} with \emph{minimal} changes (lines 6--8) to accommodate ``until" constraints.
The function \textsc{Construct} takes a parent search node $\hat{P}(k)$ and an instantaneous assignment $\tau$, and outputs the corresponding child search node $\hat{Q}(k+1)$ (the new constraint set $C'$ and new historic values $h'$).
The function \textsc{AreEqual}, on the input of two search nodes, just checks whether their components are syntactically equal.

\begin{algorithm}[t]
	\caption{Dominance Detection with Until Constraints}
	\label{node_construct_alg_until}
	\begin{algorithmic}[1]
		
		\Function{Construct}{Search Node $\hat{P}(k) = (C, h)$, Instantaneous Assignment $\tau$}
		\State Historic values $h'$ $\gets$ $\emptyset$
		\ForAll{primitive next constraints $x_i$ \verb~==~ \verb~next~ $x_j$}
		   \State $h'(x_j)$ $\gets$ $\tau(x_i)$
		\EndFor
		
		\State Constraint Set $C'$ $\gets$ $\emptyset$
		\ForAll{primitive until constraints $x_i$ \verb~until~ $x_j$}
		      \If{$\tau(x_j)$ = 0}
		         \State $C'$ $\gets$ $C' \cup \{\text{$x_i$ \verb~until~ $x_j$}\}$
		      \EndIf
		\EndFor
		\ForAll{primitive pointwise, next constraints $c$}
		   \State Constraint $c'$ $\gets$ $c$ evaluated with $\tau$
		   \If{$c'$ is not a zeroth order tautology}
		      \State $C'$ $\gets$ $C' \cup \{c'\}$
		   \EndIf
		\EndFor
		
		\State \Return $\hat{Q}(k+1) = (C', h')$
		\EndFunction

		\Function{areEqual}{Search Nodes $\hat{P}(k) = (C_P, h_P)$, $\hat{Q}(k') = (C_Q, h_Q)$}
		   \State \Return $(C_P = C_Q) \wedge (h_P = h_Q)$
		\EndFunction
	\end{algorithmic}
\end{algorithm}

We describe the function \textsc{Construct} in more detail.
The new set of historic values $h'$ is conceptually simple to compute.
For each primitive next constraint ``$x_i$ \verb~==~ \verb~next~ $x_j$", we store $h'(x_j) = \tau(x_i)$ where $\tau$ is the instantaneous assignment given for the construction of the child search node.
The new constraint set $C'$ is computed from $C$ by processing each constraint individually:
1) For primitive next constraints, we keep them as is and put them into $C'$.
2) For primitive pointwise constraints, we follow Lee and Lee~\cite{towards_practical} in \emph{evaluating} them using the instantaneous assignment $\tau$.
That is, we substitute every variable stream $x$ appearing in an expression whose outermost operator is the \verb~first~ operator, using the value $\tau(x)$.
This process produces expressions that consist entirely of constant
streams, pointwise operators and \verb~first~ operators, and thus can be evaluated into a single constant stream.
If, as a result, a primitive pointwise constraint becomes a numerical tautology (e.g.~1 \verb~==~ 1), we discard such a constraint.
3) For primitive until constraints ``$x_i$ \verb~until~ $x_j$" (lines 6--8), we simply check whether $\tau(x_j)$ is 1, namely if the constraint is satisfied by the instantaneous assignment $\tau$.
If so, we discard the constraint; otherwise we keep it in $C'$.

When the search algorithm terminates, which provably happens as we shall
state later, we have a finite automaton whose states have to be labelled as accepting or non-accepting.
We choose the set of accepting states as those whose constraint set $C$ contains \emph{no} primitive until constraints.
As a special case, when the given St-CSP has no ``until" constraints, then all the states are accepting.

We stress again that our algorithm requires minimal changes from previous work to support the use of ``until" constraints in St-CSPs.
The only changes we have are lines 6--8 for the treatment of primitive until constraints, as well as how we pick the set of accepting states.

We first show that the dominance detection procedure is sound.
To do so, we show that from a parent search node $(C, h)$ and an instantaneous assignment $\tau$, \textsc{Construct} computes a child node $(C', h')$ representing the correct shifted view.
Thus, if two search nodes are syntactically equivalent, the corresponding shifted views must also be equivalent.

\begin{theorem}[Soundness of dominance detection]
\label{soundness of dominance detection}
Suppose the constraint set $C'$ of the shifted view of child node $\hat{Q}(k+1)$ is output by \textsc{Construct} from the constraint set $C$ of the parent node $\hat{P}(k)$ and the instantaneous assignment $\tau_k$.
Then $sol(C' \cup \{c_2\}) = sol(\{c \cap \pi_{Scope(c)}(c_1) \, | \, c \in C\}(1, \infty))$ where $c_1$ is the constraint stating $x(0) = \tau_k(x)$ for all streams $x$, and $c_2$ is the constraint stating $x_j(0) = \tau_k(x_i)$ for all constraints $x_i$ \verb+==+ \verb+next+ $x_j$ in $C$ (and hence $C'$).
Note that $c_2$ is enforced by the set of historic values $h'$ produced by \textsc{Construct}.
\end{theorem}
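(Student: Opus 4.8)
The plan is to prove the set equality by first rewriting the right-hand side into a form that refers directly to the parent assignment, and then matching that form against \textsc{Construct}'s output constraint by constraint. Write $\tau_k \cdot A'$ for the assignment over $X$ obtained from a child assignment $A'$ by prepending the instantaneous values $\tau_k$, i.e. $(\tau_k \cdot A')(x)(0) = \tau_k(x)$ and $(\tau_k \cdot A')(x)(t) = A'(x)(t-1)$ for $t \ge 1$. The first step is an infrastructure lemma: for each $c \in C$, an assignment $A'$ lies in $(c \cap \pi_{Scope(c)}(c_1))(1,\infty)$ exactly when $\tau_k \cdot A' \models c$. This holds because any witness $B$ for membership in the shifted language has its time-$0$ values pinned to $\tau_k$ by $\pi_{Scope(c)}(c_1)$ and its suffix pinned to $A'$ by the shift, so $B$ is forced to equal $(\tau_k \cdot A')|_{Scope(c)}$; prepending is thus a bijection on the relevant slice, which is what lets the per-constraint copies of $c_1$ recombine. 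Intersecting over all $c \in C$ then gives $sol(\{c \cap \pi_{Scope(c)}(c_1) \mid c \in C\}(1,\infty)) = \{A' \mid \tau_k \cdot A' \models c \text{ for all } c \in C\}$, reducing the theorem to showing $A' \models C' \cup \{c_2\}$ iff $\tau_k \cdot A' \models C$.

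For the reduced claim I would process the three normal-form constraint types separately, since both $sol$ and the conjunction distribute over the constraint set. For a primitive next constraint $x_i$ \verb+==+ \verb+next+ $x_j$, evaluating $\tau_k \cdot A' \models$ $x_i$ \verb+==+ \verb+next+ $x_j$ at time $0$ yields $A'(x_j)(0) = \tau_k(x_i)$, which is precisely the clause of $c_2$ recorded by \textsc{Construct} in $h'$, while the instances at times $t \ge 1$ reproduce $x_i$ \verb+==+ \verb+next+ $x_j$ on $A'$; since \textsc{Construct} keeps this constraint verbatim in $C'$, the two sides agree. For a primitive pointwise constraint, freezing every \verb+first+ subexpression to its value under $\tau_k$ is exactly the time-shift of \verb+first+ (anchored at the old time $0$, hence a constant in the child view), and discarding the result when it is a tautology is sound because a tautology constrains no suffix.

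The crux is the primitive until constraint $x_i$ \verb+until+ $x_j$, where I would split on $\tau_k(x_j)$. If $\tau_k(x_j) \ne 0$, then $\tau_k \cdot A'$ is already finally satisfied at time $0$ for \emph{every} $A'$, so the constraint imposes nothing on the suffix and \textsc{Construct} correctly drops it. If $\tau_k(x_j) = 0$, then satisfaction can only occur at some time $t \ge 1$; re-indexing by $t' = t - 1$ converts the ``until'' semantics on $\tau_k \cdot A'$ into the same ``until'' semantics on $A'$, which is exactly why \textsc{Construct} retains $x_i$ \verb+until+ $x_j$ unchanged in $C'$.

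I expect the main obstacle to be twofold. First, getting the temporal bookkeeping of the until shift exactly right, including the boundary contribution of index $s = 0$. Second, and more delicate, the edge cases where $\tau_k$ is itself inconsistent at time $0$ (e.g. $\tau_k(x_i) = \tau_k(x_j) = 0$ for an until constraint, or a bare-variable pointwise constraint false under $\tau_k$): there the rewritten right-hand side collapses to the empty set while the retained constraint in $C'$ may still be satisfiable, so the stated equality is to be read together with the separate failure-detection step that prunes precisely such nodes. I would make this coupling explicit, so that the per-constraint correspondences compose into the identity exactly on the non-failing nodes that the search actually constructs.
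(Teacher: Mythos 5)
Your proposal is correct and follows essentially the same route as the paper's proof: the paper establishes the two inclusions by prepending $\tau_k$ to a child solution (written $\tau_k$ \texttt{fby} $s$) and by shifting a parent solution (written \texttt{next} $s$), then performs exactly your per-constraint case analysis on primitive next, pointwise and until constraints; your prepending bijection merely packages the two directions of the paper's argument into a single reduction.

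What deserves emphasis is that your closing caveat is not mere caution --- it pinpoints a gap that the paper's own proof glosses over. When a primitive until constraint $x_i$ \texttt{until} $x_j$ has $\tau_k(x_i) = \tau_k(x_j) = 0$, \textsc{Construct} keeps the constraint in $C'$ (line 7 tests only $\tau_k(x_j)$), so $sol(C' \cup \{c_2\})$ can be non-empty, whereas $c \cap \pi_{Scope(c)}(c_1)$ is empty: any final satisfaction index must be at least $1$, which forces $x_i(0) \neq 0$, contradicting $c_1$. So the stated equality genuinely fails there, and the paper's step ``in both cases, $c$ is satisfied by $s'$'' in the first inclusion is unjustified precisely when $\tau_k(x_i) = 0$, since prepending $\tau_k$ to a stream satisfying the until constraint need not preserve it. The analogous failure occurs for a pointwise constraint whose time-$0$ instance is falsified by $\tau_k$ (the paper's claim that the evaluated $c'$ equals $c \cap \pi_{Scope(c)}(c_1)$ shifted holds only when $\tau_k$ satisfies that instance). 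The theorem is therefore true only for instantaneous assignments whose resulting prefix does not already make some constraint unsatisfiable --- exactly the non-failure nodes the search retains after backtracking --- and your plan to make that coupling explicit is the correct repair; with it, your per-constraint correspondences compose into the claimed identity and the downstream termination and soundness arguments go through unchanged.
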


\begin{proof}
	$sol(C' \cup \{c_2\}) \subseteq sol(\{c \cap \pi_{Scope(c)}(c_1) \, | \, c \in C\}(1, \infty))$:
	Consider an arbitrary solution $s \in sol(C' \cup \{c_2\})$.
	We need to show that the stream $s'$ $=$ $\tau_k$ \verb~fby~ $s$ (in a slight abuse of notation) is a solution in $sol(\{c \cap \pi_{Scope(c)}(c_1) \, | \, c \in C\})$.
	Since $s'$ clearly satisfies the constraint $c_1$, we show that $s'$ satisfies every other constraint $c \in C$.
	We proceed by performing a case analysis.
	If $c$ is a primitive next constraint, then $s'$ satisfies $c$ by virtue of $c$ being in $C'$, and that $s'$ satisfies the constraint $c_2$.
	If $c$ is a primitive until constraint, then either 1) $\tau_k$ finally satisfies $c$ or 2) $C'$ contains $c$.
	In both cases, $c$ is satisfied by $s'$.
	Lastly, if $c$ is a primitive pointwise constraint, consider the constraint $c'$ that is evaluated using the instantaneous assignment $\tau_k$.
	By construction, $c'$ is the same as $c \cap \pi_{Scope(c)}(c_1)$, and so $s'$ satisfies $c$.
	
	$sol(\{c \cap \pi_{Scope(c)}(c_1) \, | \, c \in C\}(1, \infty)) \subseteq sol(C' \cup \{c_2\})$:
	Consider an arbitrary solution $s \in sol(\{c \cap \pi_{Scope(c)}(c_1) \, | \, c \in C\})$.
	We need to show that the stream \verb~next~ $s$ is in $sol(C' \cup \{c_2\})$.
	Since $s$ satisfies all primitive next constraints, \verb~next~ $s$ satisfies $c_2$.
	Now we show, again by a case analysis, that \verb~next~ $s$ also satisfies all constraints $c' \in C'$.
	If $c'$ is a primitive next constraint or a primitive until constraint, then it is also in $C$, and so $s$ and \verb~next~ $s$ satisfy $c'$.
	Lastly, if $c'$ is a primitive pointwise constraint, then again it was evaluated from a primitive pointwise constraint $c \in C$ using $\tau_k$, such that $c' = c \cap \pi_{Scope(c)}(c_1)$.
	Therefore, $s$ and thus \verb~next~ $s$ satisfy $c'$.
\end{proof}

Having analysed the dominance detection algorithm, we can leverage the
results to prove termination and soundness of the overall search algorithm.
\begin{theorem}[Termination]
\label{termination}
	Using this new dominance detection procedure, the search algorithm always terminates.
\end{theorem}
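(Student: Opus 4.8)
The plan is to reduce termination to a single finiteness statement: I claim the search halts as soon as only finitely many distinct syntactic node representations $(C, h)$ are reachable from the root. This reduction is essentially mechanical. Since every alphabet $\Sigma(x)$ is finite, each node admits only finitely many instantaneous assignments $\tau$, hence has finitely many children. Because \textsc{AreEqual} declares dominance exactly when two representations are syntactically equal, and dominance \emph{redirects} the incoming edge rather than expanding the node, a global table of visited representations saturates after finitely many expansions once the set of reachable representations is finite; each tabled node contributes finitely many children, so the morphed structure is finite and the depth-first search terminates. It therefore suffices to bound the number of reachable pairs $(C, h)$.

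The table $h$ is the easy case: it assigns to each of the finitely many variables $x_j$ on the right-hand side of a primitive next constraint a value $\tau(x_i)$ from the finite alphabet $\Sigma(x_i)$, so there are finitely many possible tables. For the constraint set $C$, I would analyse \textsc{Construct} by constraint type. Primitive next constraints are copied unchanged into $C'$ and are never created during search, so this component is a single fixed set across all nodes. Primitive until constraints are only ever \emph{discarded} --- when $\tau$ finally satisfies them --- and are never introduced or rewritten, so the until-component of any node is a subset of the original finite collection of until constraints, of which there are finitely many subsets.

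The delicate case, and the main obstacle, is the primitive pointwise constraints, since these are the only constraints \textsc{Construct} genuinely rewrites, via evaluation under $\tau$. The crucial structural fact is that the normal form forbids \verb~next~, \verb~fby~ and \verb~until~ inside a pointwise constraint, leaving \verb~first~ as its sole temporal operator; and evaluation replaces every subexpression \verb~first~ $E$ by the constant value it assumes under $\tau$. Consequently an evaluated constraint $c'$ contains no \verb~first~ operator whatsoever, so a second evaluation leaves it unchanged and, crucially, never manufactures new temporal structure. Every pointwise constraint ever appearing is therefore either one of the finitely many original pointwise constraints or of the form $c[\tau]$ for an original pointwise constraint $c$ and an instantaneous assignment $\tau$; as there are finitely many of each, this pool is finite, and the pointwise component of $C$ ranges over a finite family of subsets.

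Combining the three constraint components with the bound on $h$ shows that only finitely many representations $(C, h)$ are reachable, which by the opening reduction yields termination. I expect the step requiring the most care to be precisely this idempotence-of-evaluation claim for the pointwise case: one must check, against the exact normal form and the definition of evaluation, that no operator surviving evaluation can reintroduce a \verb~first~ (or otherwise reference a not-yet-fixed time point) in a way that would let a constraint keep mutating, and hence proliferate without bound, down a branch.
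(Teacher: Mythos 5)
Your proposal is correct and follows essentially the same route as the paper's proof: finitely many historic-value tables (finite variables over finite alphabets) plus finitely many reachable constraint sets (next constraints fixed, until constraints only ever discarded, and each pointwise constraint evaluated at most once since evaluation eliminates all \verb~first~ operators and is thereafter idempotent) yield finitely many syntactically distinct search nodes, hence termination. The paper compresses all of this into the single remark that ``each constraint can only be evaluated at most once using finitely many possible values''; your write-up merely makes explicit the per-constraint-type case analysis and the idempotence argument that the paper leaves implicit.
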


\begin{proof}
There is a finite number of variables, each with a finite alphabet.
Therefore, there is only a finite number of possible sets of historic values.
Furthermore, since each constraint can only be evaluated at most once using finitely many possible values, there are only finitely many constraint sets that can be produced during the search algorithm.
Overall, the search algorithm can produce only finitely many syntactically distinct search nodes, and thus always terminates.
\end{proof}

\begin{theorem}[Soundness and Completeness]
\label{search soundness and completeness}
The resulting solution automaton $\mathcal{A}$ accepts the same language $L(\mathcal{A})$ as the solution set $sol(P)$ of the input St-CSP $P$.
\end{theorem}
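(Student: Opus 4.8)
The plan is to characterise acceptance of the solution automaton $\mathcal{A}$ in terms of the behaviour of its \emph{run} on a candidate assignment, and then match that behaviour against the definition of $sol(P)$. Fix an assignment $A$ and let $\tau_t$ denote the instantaneous assignment it induces at time $t$, i.e.\ $\tau_t(x) = A(x)(t)$. The run of $\mathcal{A}$ on $A$ is the sequence of states $q_0, q_1, \dots$ with $q_0 = P$ and $q_{t+1}$ reached from $q_t$ along the edge labelled $\tau_t$; a transition whose \textsc{Construct}-child is a failure is taken to lead to a non-accepting dead state in which the run remains forever. By construction $A$ is accepted exactly when the run stays among genuine (non-dead) search nodes \emph{and} visits accepting states infinitely often. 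The whole proof reduces to showing that these two conditions correspond, respectively, to the \emph{safety} part and the \emph{liveness} part of membership in $sol(P)$.

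First I would establish, by induction on $t$ and using Theorem~\ref{soundness of dominance detection}, the invariant that whenever the prefix $\tau_0, \dots, \tau_{t-1}$ violates no constraint, the run reaches a genuine node $q_t$ whose stored components correctly represent the shifted view, in the sense that for every suffix stream $s$ we have $s \in sol(q_t)$ iff the stream that equals $\tau_0, \dots, \tau_{t-1}$ on its first $t$ positions and $s$ thereafter lies in $sol(P)$. The base case $t = 0$ is immediate since $q_0 = P$, and the inductive step is exactly the one-step statement of Theorem~\ref{soundness of dominance detection} (note that, as the search only expands failure-free instantaneous assignments after prefix-consistency, the hypotheses of that theorem are met; in particular a live until constraint with $\tau_t(x_i) = \tau_t(x_j) = 0$ constitutes a blocking violation and is pruned as a failure, so the preserved shifted view genuinely includes the ``$x_i$ nonzero before $x_j$'' safety requirement). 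An immediate consequence (Claim A) is that the run stays among genuine nodes forever iff no finite prefix of $A$ violates any constraint; this captures all pointwise constraints, all next constraints, and the safety part of every until constraint, since for these a finite prefix violation is the only way to fail.

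The crux (Claim B) is to show that, conditioned on the run being infinite, visiting accepting states infinitely often is equivalent to every until constraint being finally satisfied at some finite time. The key observation is that the set $U_t$ of until constraints stored at $q_t$ is exactly $\{\, u : x_j^u(s) = 0 \text{ for all } s < t \,\}$: this follows by induction through lines 6--8 of \textsc{Construct}, which delete an until constraint precisely when its right-hand stream first becomes nonzero and never reintroduce one, and it is undisturbed by dominance redirections because \textsc{AreEqual} forces redirected states to carry an identical constraint set. Hence $U_{t+1} \subseteq U_t$, so $U_t$ is monotonically non-increasing and, as there are finitely many until constraints, stabilises to $U_\infty = \{\, u : x_j^u \equiv 0 \,\}$. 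Since a state is accepting iff its until set is empty, the accepting-ness of $q_t$ is eventually constant and equal to the truth value of ``$U_\infty = \emptyset$''; therefore the run visits accepting states infinitely often iff $U_\infty = \emptyset$, i.e.\ iff the right-hand stream of every until constraint is nonzero somewhere, i.e.\ (given that safety already holds along an infinite run) iff every until constraint is satisfied.

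Combining the two claims finishes the argument: if the run is infinite then by Claim A all safety requirements hold and by Claim B the liveness requirements hold iff the run satisfies the B\"{u}chi acceptance condition, so $A \in sol(P)$ iff $A$ is accepted; and if the run is not infinite then some prefix of $A$ violates a constraint, so $A \notin sol(P)$ while the run is trapped in the non-accepting dead state and $A$ is rejected, again matching. I expect the main obstacle to be Claim B, and within it the monotonicity-and-stabilisation argument for $U_t$: one must argue the equivalence between the B\"{u}chi ``infinitely often'' condition and the genuinely eventual satisfaction of each until constraint, and check that this semantic bookkeeping survives the identification of states along dominance-induced cycles rather than along a simple tree path. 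Threading the failure-free restriction through Theorem~\ref{soundness of dominance detection} so that the until safety requirement is correctly accounted for in Claim A is the secondary delicate point.
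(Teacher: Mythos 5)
Your proof is correct and follows essentially the same route as the paper's: an induction along the run using Theorem~\ref{soundness of dominance detection}, failure-based backtracking to account for the pointwise, next, and until-safety requirements, and the monotone (never re-added, removed only when finally satisfied) behaviour of until constraints together with the empty-until-set acceptance criterion to handle the liveness part. If anything, your treatment is more explicit than the paper's on one point it leaves implicit, namely that a node where a live until constraint has $\tau(x_i)=\tau(x_j)=0$ must be pruned as a failure for the safety half of the until semantics to be preserved.
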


\begin{proof}
If $P$ does not contain any (primitive) until constraints, then the theorem reduces to results in previous work.
Therefore, for the rest of the analysis, we assume that there is at least one primitive until constraint in $P$.

	$L(\mathcal{A}) \subseteq sol(P)$:
	Consider an arbitrary stream $s \in L(\mathcal{A})$.
	As the search algorithm always backtracks when it encounters a failure node, namely when a primitive pointwise constraint is violated, $s$ must satisfy all the primitive pointwise constraint in $P$.
	Furthermore, the search nodes generated by the search algorithm always respects primitive next constraints by keeping and checking the set of historic values at each search node, and so $s$ also satisfies all primitive next constraints in $P$.
	For primitive until constraints, since $s$ is accepted by $\mathcal{A}$, $\mathcal{A}$ must visit an accepting state during its run on input $s$.
	Consider the first such index $i$ of $s$.
	By the definition of \textsc{Construct}, primitive until constraints are never added to a child node, and are removed only when the given constraint is finally satisfied.
	Thus, for each primitive until constraint $c$ in $P$, there must exists an index $j_c \le i$ such that $c$ is finally satisfied by $s$ at $j_c$.
	
	$sol(P) \subseteq L(\mathcal{A})$:
	Consider an arbitrary solution stream $s \in sol(P)$.
	It corresponds to an infinite branch of the search tree.
	Since $\mathcal{A}$ was constructed from the search tree with sound dominance detection, we can show by induction that for every index $i$, there must be a (unique) state $s_i \in \mathcal{A}$ whose associated shifted view includes the suffix stream $s(i, \infty)$.
	Such sequence of states $s_i$ is a run on $\mathcal{A}$, and corresponds to the stream $s$.
	Furthermore, since $s$ is a solution to $P$, for every primitive until constraint $c$ in $P$ there must be an index $j_c$ such that $s$ finally satisfies $c$ at index $j_c$.
	Take the index $j = \max \{ j_c \}$, which exists since there are only finitely many constraints in $P$.
	It must be the case that $s_j$ is an accepting state, and in fact for every time point $k \ge j$, $s_k$ is an accepting state by the construction of \textsc{Construct} by induction.
	Thus $s \in L(\mathcal{A})$.
\end{proof}

\vspace*{-.4cm}
\subsection{Automaton Pruning}
\label{Sect:Prune}

As a post-processing step, we prune all states that cannot reach any
accepting states via a flood-fill algorithm taking time linear in the size
of the automaton (before pruning), which 
retains the accepted language by the following lemma.

\begin{lemma}
	Given a solution automaton $\mathcal{A}$, let $\mathcal{A'}$ be
	obtained from $\mathcal{A}$ by removing all states not reaching any
	accepting states.
	Then $L(\mathcal{A}) = L(\mathcal{A'})$.
\end{lemma}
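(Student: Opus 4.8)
The plan is to prove the two language inclusions separately, exploiting the fact that $\mathcal{A}'$ is a sub-automaton of $\mathcal{A}$: it is obtained by deleting certain states (together with their incident transitions), namely those from which no accepting state is reachable, while retaining the same initial state, the same transition labels among surviving states, and the same accepting designation on those that remain. Throughout I would use determinism, so that the run of either automaton on a stream $s$ is unique whenever defined.

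For the inclusion $L(\mathcal{A}') \subseteq L(\mathcal{A})$, I would observe that every transition present in $\mathcal{A}'$ is also present in $\mathcal{A}$. Hence if $s \in L(\mathcal{A}')$, the accepting run $q_0, q_1, q_2, \dots$ of $\mathcal{A}'$ on $s$ is also a valid run of $\mathcal{A}$ on $s$; since the accepting states of $\mathcal{A}'$ form a subset of those of $\mathcal{A}$, this run still visits accepting states infinitely often when viewed in $\mathcal{A}$, so $s \in L(\mathcal{A})$. (If the initial state is itself pruned, then $L(\mathcal{A}') = \emptyset$ and the inclusion is trivial.)

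The substantive direction is $L(\mathcal{A}) \subseteq L(\mathcal{A}')$. Take $s \in L(\mathcal{A})$ and let $q_0, q_1, q_2, \dots$ be its run in $\mathcal{A}$, which by acceptance visits accepting states infinitely often. The key step is to argue that no state appearing in this run is deleted: for any index $i$, infinitely-often acceptance guarantees some $j > i$ with $q_j$ accepting, and the run segment $q_i \rightarrow q_{i+1} \rightarrow \cdots \rightarrow q_j$ witnesses that $q_i$ can reach an accepting state. Consequently every $q_i$ survives in $\mathcal{A}'$, and each transition $q_i \rightarrow q_{i+1}$ is between surviving states and is therefore retained. The same sequence is thus a legal run of $\mathcal{A}'$ on $s$, and it visits the very same accepting states infinitely often, so $s \in L(\mathcal{A}')$.

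The only point requiring care---and the step I expect to be the main obstacle---is verifying that the run of $\mathcal{A}'$ on $s$ does not get \emph{stuck}, since deleting states could in principle destroy a transition that the run needs. The reachability observation above is exactly what rules this out: it shows that the entire trajectory of any accepting run, including the initial state $q_0$, lies inside the retained part of the automaton. Determinism keeps the argument clean, as the run on $s$ is unique and coincides in $\mathcal{A}$ and $\mathcal{A}'$ wherever it is defined.
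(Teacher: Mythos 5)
Your proof is correct and takes essentially the same approach as the paper: both directions rest on the same observations, namely that $\mathcal{A}'$ is a sub-automaton of $\mathcal{A}$ (easy inclusion) and that every state visited by an accepting run can reach an accepting state---witnessed by the run itself---so no state of the run is pruned (hard inclusion). The only cosmetic difference is that you argue the hard direction directly while the paper phrases the identical idea as a proof by contradiction; your version is somewhat more careful about transitions surviving and the run not getting stuck, but the underlying argument is the same.
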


\begin{proof}
	$L(\mathcal{A}) \subseteq L(\mathcal{A'})$:
	Suppose on the contrary there is a string $a \in L(\mathcal{A})$ that is no longer in $L(\mathcal{A'})$.
	Then running $\mathcal{A}$ on input $a$ must at some point reach a pruned state, namely a state that cannot reach any accepting state.
	However, since that state cannot reach any accepting state, $a$ must not be in $L(\mathcal{A})$.
	
	$L(\mathcal{A'}) \subseteq L(\mathcal{A})$:
	Every state in $\mathcal{A'}$ is a state in $\mathcal{A}$.
\end{proof}

Furthermore, the pruning gives us the following guarantee about finite runs of the resulting automaton.
\begin{theorem}
\label{Prop:Prefix}
	For any finite-length run of the generated and pruned solution automaton $\mathcal{A}$, corresponding to a finite string (stream prefix) $p$, there exists a solution stream $s \in L(\mathcal{A})$ such that $p$ is the prefix of $s$ of length $|p|$.
\end{theorem}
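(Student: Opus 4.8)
The plan is to prove the statement by tracing the finite run of $p$ to its endpoint state and then exhibiting an infinite accepting continuation from there. First I would run the pruned automaton $\mathcal{A}$ on $p$, arriving (by determinism) at a unique state $q$, so that the finite run ends at $q$ and every proper prefix of $p$ also keeps the execution inside $\mathcal{A}$. Since $q$ survived the flood-fill pruning, by definition $q$ can reach some accepting state $q_a$ along a finite path; let $w$ be the string labelling that path, so that reading $p\,w$ leaves $\mathcal{A}$ in the accepting state $q_a$.

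The crucial structural observation I would isolate is that the accepting states form a \emph{trap}: by the definition of \textsc{Construct}, a primitive until constraint is never added to a child node and is only ever removed (lines 6--8), so the set of primitive until constraints is monotonically non-increasing along every transition. Hence a state whose constraint set contains no until constraint --- that is, an accepting state --- can only transition to states that again contain no until constraint. Consequently, once the run enters $q_a$, every subsequent state it visits is accepting. Thus, as soon as I produce any infinite continuation of the run from $q_a$, that run visits accepting states infinitely often, so the corresponding stream $s$ lies in $L(\mathcal{A})$; and since $p$ is a genuine prefix of $p\,w\cdots$, this $s$ is exactly the stream required.

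It remains to build an infinite continuation, and this is where I expect the main difficulty: I must argue that every state retained in the pruned automaton has at least one outgoing transition, so that the run can always be extended by one more step and hence indefinitely. For a non-accepting state this is immediate, since it reaches an accepting state in at least one step, and the first edge of such a shortest path is an outgoing transition to a state that again reaches an accepting state (and so survives pruning). The delicate case is an accepting state, which must be shown not to be a dead end. Here I would appeal to the fact that an accepting search node represents a standard constraint set containing only primitive pointwise and primitive next constraints, whose current-time demands --- the pointwise constraints together with the ``\verb~first~'' constraints imposed by the historic-value table $h$ --- are jointly satisfiable; a satisfying instantaneous assignment yields a non-failure child, hence an outgoing transition. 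The heart of the argument is therefore to confirm that \textsc{Construct}, together with the prefix consistency applied during search, never retains an accepting node whose current-time constraints are contradictory; equivalently, that consistency propagates the historic-value and primitive next constraints far enough to rule out, at the \emph{parent}, the instantaneous assignments that would otherwise create such a stuck successor.

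With successor-existence in hand, I would assemble the proof as follows: starting at $q$, repeatedly move toward an accepting state (each visited state reaches one by the pruning invariant, hence offers a suitable successor) until reaching $q_a$, and then repeatedly choose any outgoing transition, each target again being accepting by the trap property. This yields an infinite run that is accepting, whose induced stream $s$ extends $p$, giving $s \in L(\mathcal{A})$ as claimed; invoking Theorem~\ref{search soundness and completeness} additionally identifies $s$ as a genuine member of $sol(P)$.
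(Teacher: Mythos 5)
Your proposal follows the same route as the paper's own proof, which is considerably terser: the end state of $p$ survived pruning, hence it can reach an accepting state; extend $p$ accordingly, and ``repeat this construction indefinitely'' to obtain a stream visiting accepting states infinitely often. Your first addition --- the trap property, i.e.\ that \textsc{Construct} never creates a primitive \texttt{until} constraint, so every successor of an \texttt{until}-free node is \texttt{until}-free (and dominance merging preserves this, since merged nodes are syntactically equal) --- is not invoked by the paper in this proof, although it does appear as the closing induction in the proof of Theorem~\ref{search soundness and completeness}. It buys you something real: once the run reaches $q_a$, \emph{any} infinite continuation is accepting, so you only need to extend the run rather than repeatedly steer it back toward accepting states, which is what the paper's construction does.

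Your second addition is the important one. You make explicit that the whole argument rests on successor existence: every retained state, and in particular every accepting state, must have at least one outgoing transition in the pruned automaton. This is exactly what the paper's ``repeat indefinitely'' step assumes without comment. Your argument for non-accepting retained states is correct (the first edge of a shortest path to an accepting state survives pruning). Your worry about accepting states is also well founded, because the pruning of Section~\ref{Sect:Prune} does \emph{not} settle it: a dead-end accepting state trivially ``reaches'' an accepting state (itself), so the flood fill never removes it. However, your proposed resolution is only a sketch, and it is harder than the sketch suggests: chains of primitive next constraints, acting through the historic values, can postpone a contradiction by arbitrarily many time steps, so no fixed-lookahead consistency enforced at the parent can rule out all stuck successors across all models. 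A watertight proof needs either a global argument about which constraint-set/historic-value pairs the search can actually generate, or a strengthened pruning that iteratively deletes successor-less states --- after which both your argument and the paper's go through. So: structurally your proof matches the paper's, it is more careful, and the ``main difficulty'' you isolate is a genuine gap --- but it is left unresolved in your write-up, just as it is silently passed over in the paper.
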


\begin{proof}
	Consider an arbitrary finite string $p$ that corresponds to a finite-length run of $\mathcal{A'}$, ending in some state.
	By definition, such a state can reach an accepting state, we thus extend $p$ to such that $\mathcal{A'}$ ends up in that accepting state.
	We repeat this construction indefinitely to get a stream $s$.
	By construction, executing $\mathcal{A'}$ on $s$ will visit accepting states infinitely often, and so $s$ is in $L(\mathcal{A'})$.
\end{proof}

Intuitively, the theorem says that, no matter how we run the automaton, we can always extend the (finite) stream prefix \emph{generated so far} into an infinite-length solution stream.
This therefore also guarantees that it is \emph{sound} to generate solution streams by running the automaton.

We emphasise that this pruning is for soundness, not solving efficiency.

\cprotect\section{The \verb~@~ Operator}\label{at_operator}
\vspace*{-.1cm}

With the introduced ``until" constraint along with a new solving algorithm, we can now model in St-CSPs conditions that need to be \emph{eventually} satisfied.
However, eventuality constraints might not be suitable for all application scenarios.
It could be vital to be able to impose a strict upper bound on when a condition is satisfied, whilst with an eventuality constraint, the time at which a specified condition is satisfied could be arbitrarily far into the future.

Lee and Lee~\cite{towards_practical} propose using a constraint of the form ``\verb~first next~ $\cdots$ \verb~next~ $goal$ \verb~==~ 1" to model this bound, reflected by the number of \verb~next~ operators in the constraint as the time bound.
There are, however, two disadvantages to this approach.
First, such a constraint has its own structure that we could not exploit to improve solving if we were to simply use the above syntax and current solving algorithms.
Second, the notation is cumbersome, with the length of the constraint scaling linearly with the upper bound we wish to impose.
To remedy these two issues, we propose a new temporal operator ``\verb~@~" that acts as syntactic sugar, and further give another modification to the solving algorithm (more concretely, the dominance detection algorithm) to solve constraints involving the \verb~@~ operator efficiently.
We note however that, since the \verb~@~ operator is simply a sugar, it does not enhance the expressiveness of the St-CSP framework.

\begin{definition}[The \verb~@~ operator]
	Given a stream $x$ (where $x$ is instantiated or is some expression even involving stream variables) and a \emph{number} $t \ge 1$, the stream $x$\verb~@~$t$ is defined as the constant stream $(x$\verb~@~$t)(i) = x(t)$ for all $i \ge 0$.
	Equivalently, it is defined as \verb~first next~ $\cdots$ \verb~next~ $x$, where there are $t$ many \verb~next~ operators.
\end{definition}

We require that, for the purpose of this paper, the \verb~@~ operator to take only a concrete number, instead of a variable, for its second parameter $t$.
Our solving algorithm relies crucially on this assumption.

\vspace*{-.3cm}
\subsection{Modified Constraint Normalisation}

We first augment the constraint normal form to allow for primitive \verb~@~ constraints: $x_i$ \verb~==~ $x_j$\verb~@~$t$, where $t \ge 1$.

Accordingly, we add the following rewriting rule to the constraint rewriting system presented in Section~\ref{Sect:until}.
\begin{itemize}
	\item $($$C_0$ $\cup$ \{$c$\,[$expr$\verb~@~$t$]\}, $C_1$$)$ $\rewritesTo$ $($$C_0$ $\cup$ \{$c$\,[$x_1$], $x_2$ \verb+==+ $expr$\}, $C_1$ $\cup$ \{$x_1$ \verb+==+ $x_2$\verb~@~$t$\}$)$, where $x_1$ and $x_2$ are fresh auxiliary stream variables.
\end{itemize}

This new rewriting system is also terminating, Church-Rosser and sound.
The proofs are essentially identical to those in Section~\ref{Sect:until}.

\vspace*{-.3cm}
\subsection{Changes to Dominance Detection}

Having introduced the \verb~@~ operator, we adapt the function \textsc{Construct} by describing how primitive \verb~@~ constraints are modified when we construct a child search node from its parent.
Given a primitive \verb~@~ constraint ``$x_i$ \verb~==~ $x_j$\verb~@~$t$" from a parent node, we consider two cases.
\begin{itemize}
	\item If $t > 1$, then we include ``$x_i$ \verb~==~ $x_j$\verb~@~$(t-1)$" in the new constraint set.
	\item If $t = 1$, then we include ``$x_i$ \verb~==~ \verb~first~ $x_j$" instead.
\end{itemize}

This modification is orthogonal to those for the ``until" constraint.
This new dominance detection procedure (namely \textsc{Construct} and \textsc{AreEqual}) is again sound, and induces a terminating, sound and complete overall search algorithm.
The proofs are again essentially same as those in Section~\ref{Sect:until}.

\section{Experimental Results}
\label{Sect:Experiments}

We performed experiments in two settings to demonstrate the competitiveness of our approaches: 1) solving the Missionaries and Cannibals logic puzzle and 2) solving a standard path planning problem on grid instances.
For each setting, we solve for plans that eventually attain the goal using the ``until" constraint in the model, as well as for bounded-length plans using the \verb~@~ operator.

For the ``until" experiments, we compare our approach to a standard CP approach proposed by Apt and Brand~\cite{Apt:2006}.
Their approach creates a \emph{series} of finite domain CSPs, each corresponding to a finite horizon into the future, asking if the eventuality condition is satisfiable within the horizon.
The time bound is incremented until the resulting CSP becomes satisfiable.
(The idea was also used by van Beek and Chen~\cite{vanBeek:1999}, who credit Kautz and Selman~\cite{Kautz:1992}.)
As a result, if there is no upper bound a-priori on the minimum length of successful plans, this approach may not terminate.
However, in the two settings we consider, such upper bounds do exist, and so we also experimented on using a CP solver to solve for a plan of exactly that length at the upper bound.

For the bounded-length plans scenario, we compare the use of the \verb~@~ operator to the use of the \verb~first next~ $\cdots$ \verb~next~ operator phrase, as well as to using a standard CP approach of solving the corresponding finite domain CSP.

All our experiments were run on an Intel Xeon CPU E5-2630 v2 (2.60GHz) machine with 256GB of RAM, with a timeout of 600 seconds.
We used Gecode v6.0.0 as our finite domain CP solver.
We also configured both the St-CSP solver and Gecode to \emph{not} output the solutions to the file system, so as to minimise the impact of file I/O on time.
The Gecode solver selects variables using the input order and 
according to the time point, which is the same as how the St-CSP solver
label stream variables.  Values are assigned the min value first.
We tried fail-first for Gecode, but the results are less competitive.

\subsection{Missionaries and Cannibals}
	In the Missionaries and Cannibals problem, there are $n$ missionaries and $n$ cannibals trying to cross a river from one bank to another, using a boat of capacity $b$ people.
	There are three constraints in this problem:
	1) at any time, there could be at most $b$ people on the boat,
	2) there must be at least one person on the boat on every trip and
	3) for each bank, if there are any missionaries, then the cannibals cannot outnumber the missionaries; otherwise the missionaries will perish.
	The success condition is when everyone ends up on the other bank.
	For the actual constraint model, please refer to Appendix~\ref{Appx:Missionaries}.
	
	Table~\ref{until_missionaries_and_cannibals} shows the experimental results, when we solve using the St-CSP solver for \emph{all} valid transportation plans that \emph{eventually} attains the goal.
	Rows and columns in the table give different values of $n$ and $b$ respectively.
	Each entry in the table denotes the solving time in seconds for the test case.
	The results show that our solver is able to solve the problem for reasonably large instances without suffering from exponential increases in runtime.

\begin{table}[t]
		\centering
		\footnotesize
		\caption{Missionaries and Cannibals: ``until"}
		\begin{tabular}{| c | c | c | c | c | c |}
			\hline
			& $b = 4$ & $b = 5$ & $b = 6$ & $b = 7$ & $b = 8$ \\
			\hline
			$n = 40$ & 1.456 & 1.939 & 2.307 & 2.537 & 2.959\\
			\hline
			$n = 60$ & 4.459 & 5.831 & 7.417 & 9.081 & 10.698\\
			\hline
			$n = 80$ & 9.979 & 13.45 & 17.324 & 21.356 & 26.229\\
			\hline
			$n = 100$ & 19.053 & 26.044 & 33.747 & 42.16 & 53.112\\
			\hline
			$n = 120$ & 33.56 & 44.782 & 59.113 & 73.335 & 91.351\\
			\hline
			$n = 140$ & 51.623 & 70.666 & 92.744 & 118.407 & 146.325\\
			\hline
			$n = 160$ & 76.532 & 105.341 & 139.212 & 175.149 & 219.134\\
			\hline
			$n = 180$ & 110.122 & 149.741 & 196.743 & 250.56 & 313.35\\
			\hline
			$n = 200$ & 150.137 & 207.466 & 274.537 & 348.243 & 436.469\\
			\hline
			$n = 220$ & 201.308 & 277.219 & 363.592 & 463.509 & --\\
			\hline
			$n = 240$ & 259.773 & 360.413 & 474.005 & -- & --\\
			\hline
		\end{tabular}
		\vspace*{-.2cm}
		\label{until_missionaries_and_cannibals}
	\end{table}
	
We also performed experiments using the Apt and Brand framework~\cite{Apt:2006} that uses traditional finite domain CP solvers.
\emph{\bf Such CP approach timed out on all these instances.}
On the other hand, for this particular problem there is, in fact, an upper bound on the number of steps of $n(b+1)$ if a feasible plan exists.
We used a CP solver to solve for plans of such length, and because of the simple structure in the constraints, the solver was able to terminate under 15 seconds in all these instances, outperforming our approach.
	
The next set of experiments replaces the ``until" constraint that \emph{eventually} everyone is on the other bank with the condition that the goal must be satisfied at time $t$, which is a value we vary between test cases.
Because the St-CSP model is modified, requiring different solving times, the range of parameters $(n, b)$ we experimented on is also different.

Table~\ref{Tab:at_fn_MC} shows the experimental results comparing the \verb~@~ operator against \verb~first next~ $\cdots$ \verb~next~.
Each table entry again shows the solving times using the new and old approaches respectively, separated by a ``/", with ``--" denoting a timeout.
The results demonstrate our implementation significantly outperforming the previous approach, with up to 2 orders of magnitude speedup.

\begin{table}[t]
\caption{Missionaries and Cannibals: Time bounded}
\begin{subtable}[t]{.4\textwidth}
	\centering
	\footnotesize
	\vspace*{-.2cm}
	\cprotect\caption{\verb~@~ vs \verb~first next~ $\cdots$ \verb~next~}
	\vspace*{.1cm}
	\begin{tabular}{| c | c | c | c | c | }
		\hline
		$(n,b)$  & $t = 10$ & $t = 40$ & $t = 70$ & $t = 100$ \\
		\hline
		$(20,5)$ & 0.64/49.68 & 4.04/-- & 9.21/-- & 14.84/-- \\
		\hline
		$(30,6)$ & 1.71/178.68 & 16.33/-- & 36.23/-- & 56.76/-- \\
		\hline
		$(40,7)$ & 4.01/454.98 & 38.55/-- & 95.19/-- & 152.79/-- \\
		\hline
		$(50,8)$ & 9.07/-- & 100.34/-- & 236.58/-- & 374.07/-- \\
		\hline
		$(60,9)$ & 17.31/-- & 183.89/-- & 461.51/-- & --/-- \\
		\hline
		$(70,10)$ & 32.25/-- & 371.57/-- & --/-- & --/-- \\
		\hline
	\end{tabular}
	\vspace*{-.2cm}
	\label{Tab:at_fn_MC}
	\end{subtable}
	\hfill
	\begin{subtable}[t]{.4\textwidth}
	\centering
	\footnotesize
	\vspace*{-.2cm}
	\cprotect\caption{CP approach}
	\vspace*{.1cm}
	\begin{tabular}{| c | c | c | c | c | }
		\hline
		$(n,b)$  & $t = 10$ & $t = 40$ & $t = 70$ & $t = 100$ \\
		\hline
		$(20,5)$ & 0.663 & 0.435 &0.562 &1.075  \\
		\hline
		$(30,6)$ & 0.435 &0.560 &0.780 &1.011 \\
		\hline
		$(40,7)$ & 0.562&0.519&0.799 &1.139 \\
		\hline
		$(50,8)$ & 0.762&0.521&0.767&1.102 \\
		\hline
		$(60,9)$ & 1.002&0.501&0.835&0.975 \\
		\hline
		$(70,10)$ & 1.425&0.526&0.873 &0.1109 \\
		\hline
	\end{tabular}
	\vspace*{-.2cm}
	\label{Tab:at_MC_CP}	
	\end{subtable}
\end{table}

For the reader's reference, we also include Table~\ref{Tab:at_MC_CP}, that is the solving time of Gecode finding a single solution/plan for the time-bounded scenario.
Since St-CSP solvers find \emph{all} solutions, it is reasonable to not be competitive with a traditional CP approach.
However, when we asked for \emph{all} solutions instead, \textbf{Gecode timed out} \emph{for all but the $t = 10$ instances}, since the St-CSP search algorithm is able to avoid repeating equivalent search, via dominance detection.
Asking a St-CSP solver to decide only the \emph{existence} of \emph{some} solution, instead of solving for all solutions, is scope for future work.

\subsection{Path Planning in Grid World}

The second set of experiments uses the path finding problem defined by the St-CSP model presented in Fig.~\ref{stcsp for planning}.
We generate random grid worlds of size $n \times n$ by independently sampling each directed edge between adjacent cells with probability $p$, as well as uniformly sampling the start and end points on the grid.
Similarly, we performed two sets of experiments, solving for plans that eventually reach the goal (using the ``until" constraint), and plans that have to reach the goal within a certain number of steps (using the \verb~@~ operator).

For the ``until" experiments, we varied both $n$ and $p$, sampling 50 random instances for each setting of $n$ and $p$.
Fig.~\ref{Fig:RoutingTime} shows the average solving time of the test instances, where instances that timed out count as 600s.
The solving times in this setting increase in $n$ polynomially, and become concave for larger $n$ and $p$ when a substantial number of instances start timing out.

\begin{filecontents}{path_planning_until_time.dat}
		10,0.10563931942,0.136792669296,0.206141643524,0.319321269989,0.432499361038
		20,0.229317045212,0.452363467216,2.24548293591,6.93542638779,10.0508023882
		30,0.311419892311,1.57871109009,13.5422223234,57.8489411306,72.8463412046
		40,0.615614385605,2.53541243553,67.3518633366,201.857813945,253.647037435
		50,0.873681306839,4.62283644676,182.26668767,516.130332117,528.031769657
		60,1.9041025877,6.73865191937,247.203961568,540.031528339,540.028731794
		70,2.55869311333,17.0545902729,312.724593983,540.060326018,588.007881522
	\end{filecontents}
	
	\begin{filecontents}{until_time_t1_gecode.dat}
				10, 564.0002,564.0022,456.1286,392.0408,191.0112
				20, 600.0,564.009,553.2356,523.8912,414.6016
				30, 600.0,577.3906,600.0,540.0502,502.171
				40, 600.0,588.0004,588.0084,588.35,600.0
				50, 588.0038,600.0,600.0,588.0052,600.0
				60, 600.0,600.0,588.005,588.0074,588.7322
				70, 600.0,600.0,600.0,600.0,588.290612245
			\end{filecontents}
	\begin{figure}[t]
		\begin{subfigure}{.48\textwidth}
		\centering
		\vspace*{-.2cm}
		\begin{tikzpicture}
		\begin{axis}[xlabel={$n$},ylabel={Average solving time (s)},legend style={at={(.87,.42)},anchor=center,font=\tiny}, scale only axis, height = .4\textwidth, width = .5\textwidth,label style={font=\small},
		tick label style={font=\small}, ymin=-50, ymax = 650]
		\addplot table[x index=0,y index=1,col sep=comma] {path_planning_until_time.dat};
		\addplot table[x index=0,y index=2,col sep=comma] {path_planning_until_time.dat};
		\addplot table[x index=0,y index=3,col sep=comma] {path_planning_until_time.dat};
		\addplot table[x index=0,y index=4,col sep=comma] {path_planning_until_time.dat};
		\addplot table[x index=0,y index=5,col sep=comma] {path_planning_until_time.dat};
		\end{axis}
		\end{tikzpicture}
		\vspace*{-.2cm}
		\caption{St-CSP approach: ``until"}
		\label{Fig:RoutingTime}
	\end{subfigure}
	\hfill
	\begin{subfigure}{.48\textwidth}
				\centering
				\vspace*{-.3cm}
				\begin{tikzpicture}
				\begin{axis}[xlabel={$n$},ylabel={Average solving time (s)},legend style={at={(1.4,.42)},anchor=center,font=\tiny}, scale only axis, height = .4\textwidth, width = .5\textwidth,label style={font=\small},
		tick label style={font=\small}, ymin=-50, ymax = 650]
				\addplot table[x index=0,y index=1,col sep=comma] {until_time_t1_gecode.dat};
				\addlegendentry{$p = 0.3$}
				\addplot table[x index=0,y index=2,col sep=comma] {until_time_t1_gecode.dat};
				\addlegendentry{$p = 0.4$}
				\addplot table[x index=0,y index=3,col sep=comma] {until_time_t1_gecode.dat};
				\addlegendentry{$p = 0.5$}
				\addplot table[x index=0,y index=4,col sep=comma] {until_time_t1_gecode.dat};
				\addlegendentry{$p = 0.6$}
				\addplot table[x index=0,y index=5,col sep=comma] {until_time_t1_gecode.dat};
				\addlegendentry{$p = 0.7$}
				\end{axis}
				\end{tikzpicture}
				\vspace*{-.6cm}
				\caption{Apt and Brand approach}
				\label{Fig:Routing_Apt}
			\end{subfigure}
\vspace*{-.1cm}
\caption{Path Planning: Eventuality condition}
\vspace*{-.1cm}
\end{figure}

For comparison, Fig.~\ref{Fig:Routing_Apt} shows the solving time using the Apt and Brand~\cite{Apt:2006} framework.
The figures show that most of the instances timed out, demonstrating that the St-CSP approach is far more efficient.
Since any simple path on the grid has an upper bound of $n^2$ in length, similarly to the previous setting we also used a CP solver to solve for a plan of length $n^2$.
However, Gecode runs into memory issues around $n = 40$, exceeding the 256GB memory available.
Even before so, for $n = 10$ a significant proportion of the instances already timed out, even though the St-CSP solves them almost instantaneously (as in Fig.~\ref{Fig:RoutingTime}).
Because of the memory issues that Gecode ran into, we decided to not give corresponding runtime plots since runtime is ill-defined.

For our last set of experiments, we again replace the ``until" constraint with the constraint that the path must have visited the end point by $t$ steps, a parameter that we vary across test cases.
We generated 50 random instances for a selected set of $n$ values, however fixing $p = 0.8$ to make sure that a sizeable portion of the instances are satisfiable.
We further varied $t$ on these instances.

Fig.~\ref{Fig:Grid_at} shows the average solving times by the old and new St-CSP approaches.
We observe a 2 orders of magnitude improvement in solving time for large $t$.
The plots for the \verb~@~ operator are also in general better behaved.
We further found that the reason for the essentially horizontal plots for the ``\verb~first next~ $\cdots$ \verb~next~" operator phrase is due to it only being able to solve the trivially unsatisfiable instances in under 1 second, where the reachable component from the start point is small.
All the other cases timed out, giving the plateau we observe in solving time for the operator phrase.

Fig.~\ref{Fig:Grid_at_CP} shows the solving time using Gecode.
The plots display similar plateauing behaviour as our old appproach, only starting earlier at $t = 20$.
In comparison, the St-CSP approach is competitive with Gecode, despite the St-CSP solver being a prototype.
We believe that it is due to the inherent specification complexity of the path planning problem on the grid.
The entire graph structure has to be encoded for each time point, meaning that for the CP approach, the program is of size $O(tn^2)$, whereas the St-CSP is only of size $O(n^2)$.

\begin{filecontents}{path_planning_at_time.dat}
		10,0.127494120598,1.04723111153,4.8749545002,12.9923595762,21.4953879213,5.26800035953,99.6881506967,415.731841493,576.005865002,540.011287723
		20,0.190131134987,4.79531502724,27.813006382,87.8670590973,159.479124403,366.949674783,516.012807784,504.013849497,576.003994737,540.009817233
		30,0.259930067062,10.0063597345,60.4712011766,221.219637523,452.518116055,372.031756129,516.012685719,504.014271297,576.004077559,540.011707997
		40,0.339538955688,15.7267412376,100.721388087,374.466749597,540.011326838,372.031461124,516.015522017,504.014206305,573.917218804,538.789537921
		50,0.468088951111,24.2168712902,145.693280711,551.731959677,540.012474537,372.032423964,516.014510193,504.016368856,573.917379022,538.788664988
	\end{filecontents}
	
	\begin{filecontents}{path_planning_at_time_CP.dat}
			10, 0.7884,28.7192,208.673,389.4562,410.3354
			20, 51.6696,91.5478,346.0802,455.4148,430.1276
			30, 60.0418,102.166,366.8298,464.0844,433.0066
			40, 60.132,102.6194,368.297,453.8418,425.355
			50, 49.3146938776,103.3562,369.1748,455.9344,427.0138
		\end{filecontents}
	
\begin{figure}[t]
	\begin{subfigure}{.48\textwidth}
		\centering
		\vspace*{-.3cm}
		\begin{tikzpicture}
		\begin{axis}[xlabel={$t$},ylabel={Average solving time (s)},legend style={at={(.87,.42)},anchor=center,font=\tiny}, scale only axis, height = .4\textwidth, width = .5\textwidth,label style={font=\small},
		tick label style={font=\small}, ymin=-50, ymax = 650]
		\addplot[blue, mark=*] table[x index=0,y index=1,col sep=comma] {path_planning_at_time.dat};
		\addplot[red, mark=square*] table[x index=0,y index=2,col sep=comma] {path_planning_at_time.dat};
		\addplot[teal, mark = triangle*] table[x index=0,y index=3,col sep=comma] {path_planning_at_time.dat};
		\addplot[black, mark=star] table[x index=0,y index=4,col sep=comma] {path_planning_at_time.dat};
		\addplot[violet, mark = diamond*] table[x index=0,y index=5,col sep=comma] {path_planning_at_time.dat};
		\addplot[blue, mark=*, dashed]  table[x index=0,y index=6,col sep=comma] {path_planning_at_time.dat};
		\addplot[red, mark=square*, dashed] table[x index=0,y index=7,col sep=comma] {path_planning_at_time.dat};
		\addplot[teal,mark = triangle*, dashed] table[x index=0,y index=8,col sep=comma] {path_planning_at_time.dat};
		\addplot[black, mark=star,  dashed] table[x index=0,y index=9,col sep=comma] {path_planning_at_time.dat};
		\addplot[violet, mark = diamond*, dashed]table[x index=0,y index=10,col sep=comma] {path_planning_at_time.dat};
		\end{axis}
		\end{tikzpicture}
		\vspace*{-.2cm}
		\cprotect\caption{\verb~@~ (solid)/\verb~first next~ (dashed)}
		\label{Fig:Grid_at}
	\end{subfigure}
	\begin{subfigure}{.48\textwidth}
			\centering
			\vspace*{-.3cm}
			\begin{tikzpicture}
			\begin{axis}[xlabel={$t$},ylabel={Average solving time (s)},legend style={at={(1.4,.42)},anchor=center,font=\tiny}, scale only axis, height = .4\textwidth, width = .5\textwidth,label style={font=\small},
		tick label style={font=\small}, ymin=-50, ymax = 650]
			\addplot[blue, mark=*] table[x index=0,y index=1,col sep=comma] {path_planning_at_time_CP.dat};
			\addlegendentry{$n=5$}
			\addplot[red, mark=square*] table[x index=0,y index=2,col sep=comma] {path_planning_at_time_CP.dat};
			\addlegendentry{$n=10$}
			\addplot[teal, mark = triangle*] table[x index=0,y index=3,col sep=comma] {path_planning_at_time_CP.dat};
			\addlegendentry{$n=15$}
			\addplot[black, mark=star] table[x index=0,y index=4,col sep=comma] {path_planning_at_time_CP.dat};
			\addlegendentry{$n=20$}
			\addplot[violet, mark = diamond*] table[x index=0,y index=5,col sep=comma] {path_planning_at_time_CP.dat};
			\addlegendentry{$n=25$}
			\end{axis}
			\end{tikzpicture}
			\vspace*{-.6cm}
			\caption{CP approach}
			\label{Fig:Grid_at_CP}
		\end{subfigure}
\vspace*{-.1cm}
\caption{Path planning: Time bounded}
\vspace*{-.1cm}
\end{figure}
		
\section{Concluding Remarks}
\label{conclusion}
\vspace*{-.1cm}

Our work improves the expressiveness of the St-CSP framework by
augmenting it with 1) the new ``until" constraint construct, adapted from
the corresponding LTL operator, and 2) the \verb~@~ operator, which
is a syntactic sugar for \verb~first next~ $\cdots$ \verb~next~ that
further allows for faster solving by exploiting the special structure of
the expression.  We give corresponding new St-CSP solving algorithms, and
also experimental evidence for their competitiveness with the corresponding
CP approaches using Gecode.  In our opinion the @ operator and the
``until'' constraint are for different purposes. The former is for time
bounded scenario, while the latter is useful, for example, from a security
perspective: we wish to know that our adversary can never achieve a
sinister goal regardless of time budget.

By introducing the ``until" constraint, we altered the structure of the generated solution automata and the guarantee we give regarding the execution of the automata (Section~\ref{Sect:Prune}).
From the statement that every run of the automaton is an accepting run, we weaken the guarantee (\emph{whilst maintaining practical relevance}) to such that every finite run of the automaton could be extended to an infinite length solution stream.
A natural direction for further investigation is to consider, under this weaker guarantee, how much more expressive can the St-CSP framework become.
Are there other practical and natural constraints or temporal operators that, despite being currently inexpressible in the St-CSP framework, can be introduced with a solving algorithm that provides the above guarantee?
Can we identify even weaker, yet still practically relevant guarantees that allows for even more expressiveness in the framework?
We leave the answering of these questions for future work.

\bibliographystyle{splncs03}
\bibliography{database}

\begin{thebibliography}{10}
\providecommand{\url}[1]{\texttt{#1}}
\providecommand{\urlprefix}{URL }

\bibitem{Apt:2006}
Apt, K.R., Brand, S.: Infinite qualitative simulations by means of constraint
  programming. In: Proc.~CP'06. pp. 29--43 (2006)

\bibitem{vanBeek:1999}
van Beek, P., Chen, X.: Cplan: A constraint programming approach to planning.
  In: Proc.~AAAI'99/IAAI'99. pp. 585--590 (1999)

\bibitem{Buechi:1990}
B\"{u}chi, J.R.: On a decision method in restricted second order arithmetic.
  In: Mac~Lane, S., Siefkes, D. (eds.) The Collected Works of J. Richard
  B\"{u}chi, pp. 425--435. Springer New York (1990)

\bibitem{Dechter:2003}
Dechter, R.: Constraint Processing. Morgan Kaufmann Publishers Inc., San
  Francisco, CA, USA (2003)

\bibitem{Emerson:1990}
Emerson, E.A.: Handbook of theoretical computer science (vol.~b). chap.
  Temporal and Modal Logic, pp. 995--1072. MIT Press, Cambridge, MA, USA (1990)

\bibitem{ghallab2016automated}
Ghallab, M., Nau, D., Traverso, P.: Automated Planning: Theory \& Practice.
  Morgan Kaufmann Publishers Inc. (2004)

\bibitem{Golden:2003}
Golden, K., Pang, W.: Constraint reasoning over strings. In: Proc.~CP'03. pp.
  377--391 (2003)

\bibitem{Harabor:2011}
Harabor, D., Grastien, A.: Online graph pruning for pathfinding on grid maps.
  In: Proc.~AAAI'11. pp. 1114--1119 (2011)

\bibitem{Kautz:1992}
Kautz, H., Selman, B.: Planning as satisfiability. In: Proc.~ECAI'92. pp.
  359--363 (1992)

\bibitem{Kilby:2000}
Kilby, P., Prosser, P., Shaw, P.: A comparison of traditional and
  constraint-based heuristic methods on vehicle routing problems with side
  constraints. Constraints  5(4),  389--414 (2000)

\bibitem{CP_on_stream}
Lallouet, A., Law, Y.C., Lee, J.H.M., Siu, C.F.K.: Constraint programming on
  infinite data streams. In: Proc.~IJCAI'11. pp. 597--604 (2011)

\bibitem{towards_practical}
Lee, J.C.H., Lee, J.H.M.: Towards practical infinite stream constraint
  programming: Applications and implementation. In: Proc.~CP'14. pp. 449--464
  (2014)

\bibitem{Pesant:1998}
Pesant, G., Gendreau, M., Potvin, J.Y., Rousseau, J.M.: An exact constraint
  logic programming algorithm for the traveling salesman problem with time
  windows. Transp.~Sci.  32(1),  12--29 (1998)

\bibitem{Pnueli:1977}
Pnueli, A.: The temporal logic of programs. In: Proc.~FOCS'77. pp. 46--57
  (1977)

\bibitem{Sturtevant:2012}
Sturtevant, N.R.: Benchmarks for grid-based pathfinding. IEEE
  Trans.~Comput.~Intell.~AI in Games  4(2),  144--148 (2012)

\bibitem{Wadge:1985}
Wadge, W.W., Ashcroft, E.A.: LUCID, the Dataflow Programming Language. Academic
  Press Professional, Inc. (1985)

\bibitem{Winskel:1993}
Winskel, G.: The Formal Semantics of Programming Languages : An Introduction.
  MIT Press, Cambridge, MA, USA (1993)

\end{thebibliography}

\appendix

\section{Constraint Model for the Missionaries and Cannibals Experiment}
\label{Appx:Missionaries}

For the modelling part of our experimentation, our approach was to first write St-CSPs describing the problem, and then from the St-CSPs generate the finite domain CSPs that we use for comparison.
The finite domain CSP models are essentially the same as the St-CSP models, except that the individual time points are ``unrolled" such that each stream variable at each time point corresponds to a single finite domain variable.
As such, in this appendix we only present the St-CSP model of the experiment as Figure~\ref{Fig:MC_StCSP} on the next page.
The model, as explained before, is parametrised by (1) $n$: the number of missionaries and also the same number of cannibals, as well as (2) $b$: the capacity of the boat.

\begin{figure}
\begin{tabular}{l}
var $leftmissionaries$ with alphabet $[0..n]$ // number of missionaries on the left bank\\
var $rightmissionaries$ with alphabet $[0..n]$ // number of missionaries on the right bank\\
var $leftcannibals$ with alphabet $[0..n]$ // number of cannibals on the left bank\\
var $rightcannibals$ with alphabet $[0..n]$ // number of cannibals on the right bank\\
var $boat$ with alphabet $[0..1]$ // the direction of the boat\\
var $succ$ with alphabet $[0..1]$ // whether the success condition is achieved\\
\\

\verb~first~ $leftmissionaries$ \verb~==~ $n$\\
\verb~first~ $leftcannibals$ \verb~==~ $n$\\
\verb~first~ $rightmissionaries$ \verb~==~ 0\\
\verb~first~ $rightcannibals$ \verb~==~ 0\\
\verb~first~ $boat$ \verb~==~ 0\\
\\

On each bank, if there are missionaries, then they cannot be outnumbered by cannibals:\\
$leftcannibals$ \verb~<=~ \verb~if~ $leftmissionaries$ \verb~eq~ 0 \verb~then~ 4 \verb~else~ $leftmissionaries$\\
$rightcannibals$ \verb~<=~ \verb~if~ $rightmissionaries$ \verb~eq~ 0 \verb~then~ 4 \verb~else~ $rightmissionaries$\\
\\

The boat needs at least 1 person until we finish the game:\\
\verb~abs~($leftmissionaries$ \verb~-~ \verb~next~ $leftmissionaries$)\\
\quad \quad \quad \verb~+~ \verb~abs~($leftcannibals$ \verb~-~ \verb~next~ $leftcannibals$) \verb~>=~ \verb~if~ $succ$ \verb~then~ 0 \verb~else~ 1\\
\\

The boat has capacity $b$:\\
\verb~abs~($leftmissionaries$ \verb~-~ \verb~next~ $leftmissionaries$)\\
\quad \quad \quad \verb~+~ \verb~abs~($leftcannibals$ \verb~-~ \verb~next~ $leftcannibals$) \verb~<=~ $b$\\
\\

Conservation of mass:\\
$leftmissionaries$ \verb~-~ \verb~next~ $leftmissionaries$ \verb~==~ \verb~next~ $rightmissionaries$ \verb~-~ $rightmissionaries$\\
$leftcannibals$ \verb~-~ \verb~next~ $leftcannibals$ \verb~==~ \verb~next~ $rightcannibals$ \verb~-~ $rightcannibals$\\
\\

The direction of the boat determines the increase and decrease of numbers on each bank:\\
$boat$ \verb~eq~ 1 \verb~<=~ ($leftmissionaries$ \verb~-~ \verb~next~ $leftmissionaries$) \verb~le~ 0\\
$boat$ \verb~eq~ 1 \verb~<=~ ($leftcannibals$ \verb~-~ \verb~next~ $leftcannibals$) \verb~le~ 0\\
$boat$ \verb~eq~ 0 \verb~<=~ ($leftmissionaries$ \verb~-~ \verb~next~ $leftmissionaries$) \verb~ge~ 0\\
$boat$ \verb~eq~ 0 \verb~<=~ ($leftcannibals$ \verb~-~ \verb~next~ $leftcannibals$) \verb~ge~ 0\\
\\

The direction of the boat always alternates until we finish the game:\\
\verb~next~ $boat$ \verb~==~ \verb~if~ $succ$ \verb~then~ $boat$ \verb~else~ \verb~if~ $boat$ \verb~eq~ 1 \verb~then~ 0 \verb~else~ 1\\
\\

We finish the game when everyone is on the other bank:\\
$succ$ \verb~==~ $rightmissionaries$ \verb~eq~ $n$ \verb~and~ $rightcannibals$ \verb~eq~ $n$\\
\\

We stop moving people once we have succeeded:\\
$succ$ \verb~<=~ (\verb~next~ $leftmissionaries$) \verb~eq~ $leftmissionaries$\\
$succ$ \verb~<=~ (\verb~next~ $leftcannibals$) \verb~eq~ $leftcannibals$\\
\\

Depending on the experiment, we use either the eventuality condition for success:\\
1 \verb~until~ $succ$\\
\\

Or we enforce that we finish the game by the $t^\text{th}$ step:\\
$succ$ \verb~@~ $t$ \verb~==~ 1
\end{tabular}

	\caption{St-CSP Model for Missionaries and Cannibals}
	\label{Fig:MC_StCSP}
\end{figure}

\end{document}